\theoremstyle{plain} \newtheorem{thm}{Theorem}
 \newtheorem{lem}[thm]{Lemma}
\theoremstyle{definition} 
\theoremstyle{remark} 
\newtheorem{lemma}{Lemma}
  \newenvironment{Proof}{\noindent{\bf Proof} \ }{\QED}\smallskip
\newcommand\QED{\newline \rightline{$\blacksquare$} \bigskip}
  \newenvironment{Proof thm4}{\noindent{\bf Proof of Theorem 4} \ }{\QED}\smallskip
  \newenvironment{Proof thm5}{\noindent{\bf Proof of Theorem 5} \ }{\QED}\smallskip
  \newenvironment{Proof thm6}{\noindent{\bf Proof of Theorem 6} \ }{\QED}\smallskip
\newcommand\Rp{\mathcal{R}_{L,P,\lambda}}
\newcommand\RT{\mathcal{R}_{L,T}}
\newcommand\RP{\mathcal{R}_{L,P}}
\newcommand\h{\mathcal{H}}
\newcommand\e{\epsilon_n}
\newcommand\X{\mathcal{X}}
\newcommand\x{\mathbf{X}}
\newcommand\F{\mathcal{F}}
\begin{document}

%

%

\twocolumn[

\aistatstitle{Learning Efficient Anomaly Detectors from $K$-NN Graphs}

\aistatsauthor{ Jing Qian \And Jonathan Root \And Venkatesh Saligrama }

\aistatsaddress{ Boston University \And Boston University \And Boston University } ]

\begin{abstract}
We propose a non-parametric anomaly detection algorithm for high dimensional
data. We score each datapoint by its average $K$-NN distance, and rank them 
accordingly. We then train limited complexity models to imitate these
scores based on the max-margin learning-to-rank framework. A test-point is declared as an anomaly
at $\alpha$-false  alarm level if the predicted score is in the $\alpha$-percentile.
The resulting anomaly detector is shown to be asymptotically optimal in that for
any false alarm rate $\alpha$, its decision region converges to the $\alpha$-percentile
minimum volume level set of the unknown underlying density. In addition, we test
both the statistical performance and computational efficiency of our
algorithm on a number of synthetic and real-data experiments. Our results
demonstrate the superiority of our algorithm over existing $K$-NN based
anomaly detection algorithms, with significant computational savings.
\end{abstract}

\section{Introduction}\label{sec:intro}

Anomaly detection is the problem of identifying statistically significant deviations in data from expected normal behavior. It has found wide applications
in many areas such as credit card fraud detection, intrusion detection for cyber security, sensor networks and video surveillance \citep{ref:anomaly_detection_survey,ref:AD_survey_hodge}.

In classical parametric methods \citep{ref:para_1993} for anomaly detection, we assume the existence of a family of functions characterizing the nominal density 
(the test data consists of examples belonging to two classes--nominal
and anomalous).
Parameters are then estimated from training data by minimizing a loss function. While these methods provide a statistically justifiable solution when the assumptions hold true, they are likely to suffer from model mismatch, and lead to poor performance.

We focus on the non-parametric approach, with a view towards
minimum volume (MV) set estimation. Given $\alpha\in (0,1)$,
the MV approach attempts to find the set of minimum volume which
has probability mass at least $1-\alpha$ with respect to the unknown
sample probability distribution. Then given a new test point, it is
declared to be consistent with the data if it lies in this MV set.

Approaches to the MV set estimation problem include
estimating density level sets \citep{ref:levelset_mvset_2003,ref:Cuevas2003}
or estimating the boundary of the MV set \citep{ref:MV_2006,ref:MV_2010}.
However, these approaches suffer from high sample complexity,
and therefore are statistically unstable using high dimensional data.
The authors of \citep{ref:Manqi2009} score each test point using the
$K$-NN distance. Scores turn out to yield empirical estimates
of the volume of minimum volume level sets containing the test
point, and avoids computing any high dimensional quantities.
The papers \citep{ref:GEM_2006,ref:knn_2011}
also take a $K$-NN based approach to MV set anomaly detection.
The second paper \citep{ref:knn_2011} improves upon the
computational performance of \citep{ref:GEM_2006}. However,
the test stage runtime of \citep{ref:knn_2011}  is of order
$O(dn)$, $d$ being the ambient dimension and $n$ the 
sample size. 
The test stage runtime of  \citep{ref:Manqi2009} is of order $O(dn^2+n^2\log(n))$. 

Computational inefficiencies of these $K$-NN based anomaly detection methods suggests that a different approach based on distance-based (DB) outlier methods (see \citep{orair} and references therein) could possibly be leveraged in this context. 
DB methods primarily focus on the {\it computational} issue of identifying a pre-specified number of $L$ points (outliers) with largest $K$-NN distances in a database. Outliers are identified by pruning examples with small $K$-NN distance. This works particularly well for small $L$. 

In contrast, for anomaly detection, we not only need an efficient scheme but also one that takes training data (containing no anomalies) and generalizes well in terms of AUC criterion on test-data where the number of anomalies is unknown. We need schemes that predict ``anomalousness'' for test-instances in order to adapt to any false-alarm-level and to characterize AUCs. One possible way to leverage DB methods is to estimate anomaly scores based only on the $L$ identified outliers but this scheme generally has poor AUC performance if there are a sizable fraction of anomalies. In this context \citep{ref:isolation_forest,ref:massAD,ref:knn_2011} propose to utilize ORCA \citep{orca}. ORCA is a well-known {\it ranking DB} method that provides intermediate estimates for every instance in addition to the $L$ outliers. They show that while for small $L$ ORCA is highly efficient its AUC performance is poor. For large $L$ ORCA produces low but somewhat meaningful AUCs but can be computationally inefficient. A basic reason for this AUC gap is that although such {\it rank-based} DB techniques provide intermediate KNN estimates \& outlier scores that can possibly be leveraged, these estimates/scores are often too unreliable for anomaly detection purposes. Recently, \citep{lsh} have considered strategies based on LSH to further speed up rank based DB methods.  Our perspective is that this direction is somewhat complementary. Indeed, we could also employ Kernel-LSH \citep{klsh} in our setting to further speed up our computation. 


In this paper, we propose a ranking based algorithm which retains
the statistical complexity of existing $K$-NN work, but with far
superior computational performance.
Using scores based on average $K$NN distance, we learn a functional
predictor through the pair-wise learning-to-rank framework, to predict $p$-value scores. This predictor is then used to
generalize over unseen examples. The test time of our algorithm is
of order $O(ds)$, where $s$ is the complexity of our model.


The rest of the paper is organized as follows. In Section 2 we introduce the problem setting and the motivation. Detailed algorithms are described in Section 3 and 4. The asymptotic and finite-sample analyses are provided in Section 5. Synthetic and real experiments are reported in Section 6. 
\vspace{-0.1in}
\section{Problem Setting \& Motivation}
Let $\bold{x}=\lbrace x_1, x_2, ..., x_n\rbrace $ be a given set of nominal $d$-dimensional data points. We assume $\bold{x}$ to be sampled i.i.d from an unknown density $f_0$ with compact support in $\mathbb{R}^d$. The problem is to assume a new data point,
$\eta\in \mathbb{R}^d$, is given, and test whether $\eta$ follows the distribution
of $\bold{x}$. If $f$ denotes the density of this new (random) data point, then the set-up
is summarized in the following hypothesis test:
\[
H_0: f=f_0 \;\;\;\;\; \text{vs.} \;\;\;\;\; H_1: f\neq f_0.
\]
We look for a functional $D:\mathbb{R}^d\to \mathbb{R}$ such that $D(\eta)>0\implies$ $\eta$ nominal.
Given such a $D$, we define its corresponding acceptance region by $A= \{ x : D(x)>0\}$.
We will see below that $D$ can be defined by the $p$-value.

Given a prescribed significance level (false alarm level) $\alpha\in (0,1)$, we require the probability that $\eta$ {\it does not} deviate from the nominal ($\eta \in A$), given $H_0$, to be bounded below by $1-\alpha$.
We denote this distribution by $P$ (sometimes written $P(\text{not } H_1 | H_0)$):
\[
P(A)=\int_A f_0(x) \; dx \geq 1-\alpha.
\]
Said another way, the probability that $\eta$ {\it does} deviate from the nominal,
given $H_0$, should fall under the specified significance level $\alpha$ (i.e.
$1-P(A)=P( H_1 | H_0) \leq \alpha$).
At the same time, the false negative, $\int_A f(x) \; dx $, must be minimized.
Note that the false negative is the probability of the event $\eta \in A$, given $H_1$.
We assume $f$ to be bounded
above by a constant $C$, in which case $\int_A f(x) \; dx \leq C\cdot \lambda(A)$,
where $\lambda$ is Lebesgue measure on $\mathbb{R}^d$. The problem of finding
the most suitable acceptance region, $A$, can therefore
be formulated as finding the following minimum volume set:
\begin{equation}\label{MV}
U_{1-\alpha}:=\arg\min_{A} \left\{ \lambda(A) : \int_A f_0(x) \; dx \geq 1-\alpha \right\}.
\end{equation}
In words, we seek a set $A$ which captures at least a fraction $1-\alpha$ of
the probability mass, of minimum volume.

 \section{Score Functions Based on K-NNG} \label{sec:scorefunc}
  In this section, we briefly review an algorithm using score functions based on
  nearest neighbor graphs for determining minimum volume sets \citep{ref:Manqi2009, ref:Jing2012}.
  Given a test point $\eta \sim f$, define the $p$-value
  of $\eta$ by
  \[
   p(\eta) := P\left( x : f_0(x)\leq f_0(\eta)\right)= \int_{\{x : f_0(x)\leq f_0(\eta)\}}f_0(x)\; dx.
   \]
  Then, assuming technical conditions on the density $f_0$ \citep{ref:Manqi2009},
   it can be shown that $p$ defines the minimum volume set:
  \[
  U_{1-\alpha} = \{x : p(x)\geq \alpha\}.
  \]
  Thus if we know $p$, we know the minimum volume set,
  and we can declare anomaly simply by checking whether or not
  $p(\eta) < \alpha$. However, $p$
  is based on information from the unknown density $f_0$, hence we
  must estimate $p$.

  Set $d(x,y)$ to be the Euclidean metric on $\mathbb{R}^d$.
  Given a point $x\in \mathbb{R}^d$,  we form its associated $K$ nearest
  neighbor graph (K-NNG), relative to $\bold{x}$,
  by connecting it to the $K$ closest
  points in $\bold{x}\setminus \{x\}$. Let $D_{(i)}(x)$
  denote the distance from $x$ to its $i$th nearest neighbor
  in $\bold{x}\setminus \{x\}$.
  Set
  \begin{equation}\label{$K$-NNstat}
  G_{\bold{x}}(x) = \frac{1}{K}\sum_{j=1}^KD_{(j)}(x).
  \end{equation}
  Now define the following score function:
  \begin{equation}\label{estimate_p}
  R_n(\eta) := \frac{1}{n} \sum_{i=1}^n \textbf{1}_{\{G_{\bold{x}}(\eta) < G_{\bold{x}}(x_i)\}}
  \end{equation}
  This function measures the relative concentration of point $\eta$ compared to
  the training set.
In \citep{ref:Jing2012},
given a pre-defined significance level $\alpha$ (e.g. 0.05), they declare $\eta$
to be anomalous if $R_n(\eta) \leq \alpha$. This choice is motivated
by its close connection to multivariate $p$-values. Indeed,
it is shown in \citep{ ref:Jing2012} that this score function is an asymptotically consistent estimator of the $p$-value:
\[
\lim_{n\to \infty}   R_n(\eta) = p(\eta) \;\;\; \text{a.s.}
\]
This result is attractive from a statistical viewpoint, however the test-time complexity of the
$K$-NN distance statistic grows as $O(dn)$. 
This can be prohibitive for real-time applications. Thus we are compelled to
learn a score function respecting the $K$-NN distance statistic, but with
significant computational savings. 
This is achieved by mapping the data set $\bold{x}$ into a reproducing kernel Hilbert
space (RKHS), $H$, with kernel $k$ and inner product $\langle \cdot, \cdot\rangle$.
We denote by $\Phi$ the mapping $\mathbb{R}^d\to H$,
defined by $\Phi(x_i) =k(x_i,\cdot)$. We then optimally learn a ranker $g\in H$
based on the ordered pair-wise ranking information,
\[
\{(i,j) : G_{\bold{x}}(x_i) > G_{\bold{x}}(x_j)\}
\]
and construct the scoring function as
\begin{equation}\label{estimate_pn}
\hat{R}_n(\eta) := \frac{1}{n} \sum_{i=1}^n \textbf{1}_{\{\langle g,\Phi(\eta)\rangle < \langle g, \Phi(x_i)\rangle\}}.
\end{equation}
It turns out that $\hat{R}$ is an asymptotic estimator of the $p$-value (see Section
\ref{sec:analysis})
and thus we will say a test point $\eta$ is anomalous if $\hat{R}(\eta)\leq \alpha$.



\section{Anomaly Detection Algorithm}\label{sec:main_algo}
In this section we describe our rank-based anomaly detection algorithm (RankAD), and discuss several of its properties and advantages. 

\noindent\rule[0.5ex]{\linewidth}{1pt}
\noindent\textbf{Algorithm 1: RankAD Algorithm}

\noindent\rule[0.5ex]{\linewidth}{1pt}
\noindent\textbf{1. Input:}
\noindent Nominal training data $\bold{x}={\lbrace x_1, x_2, ..., x_n \rbrace}$, desired false alarm level $\alpha$, and test point $\eta$.

\noindent\textbf{2. Training Stage:}

\noindent(a) Calculate $K$th nearest neighbor distances $G_{\bold{x}}(x_i)$, and calculate
$R_n(x_i)$ for each nominal sample $x_i$, using Eq.(\ref{$K$-NNstat}) and Eq.(\ref{estimate_p}).

\noindent(b) Quantize $\{R_n(x_i),\,i=1,2,...,n\}$ uniformly into $m$ levels: $r_q(x_i) \in \lbrace 1,2,..., m \rbrace$. Generate preference pairs $(i, j)$ whenever their quantized levels are different: $r_q(x_i) > r_q(x_j)$.

\noindent(c) 
Set $\mathcal{P}=\{(i,j) : r_q(x_i)> r_q(x_j)\}$. Solve:
\begin{eqnarray}\label{eq:ranksvm_standard}
  \min_{g,\xi_{ij}}: \, && \,\, \frac{1}{2} ||g||^2 + C \sum_{(i,j)\in\mathcal{P}} \xi_{ij} \\
\nonumber
  s.t. \, &&  \,\, \langle g,\, \Phi(x_i)-\Phi(x_j) \rangle \geq 1 - \xi_{ij}, \,\,\,\, \forall (i,j)\in \mathcal{P}  \\
\nonumber
          &&  \,\,  \xi_{ij} \geq 0
\end{eqnarray}

\noindent(d) Let $\hat{g}$ denote the minimizer. Compute and sort: $\hat{g}(\cdot)=\langle \hat{g}, \Phi(\cdot)\rangle$ on $\bold{x}={\lbrace x_1, x_2, ..., x_n \rbrace}$.

\noindent\textbf{3. Testing Stage:}

\noindent(a) Evaluate $\hat{g}(\eta)$ for test point $\eta$.

\noindent(b) Compute the score: $\hat{R}_n(\eta) = \frac{1}{n}\sum^n_{i=1} \textbf{1}_{\{ \hat{g}(\eta) < \hat{g}(x_i) \}}$. This can be done through a binary search over sorted $\{ \hat{g}(x_i),i=1,...,n \}$.

\noindent(c) Declare $\eta$ as anomalous if $\hat{R}_n(\eta) \leq \alpha$.

\noindent\rule[0.5ex]{\linewidth}{1pt}

\noindent{\bf Remark 1:}
The standard learning-to-rank setup \citep{ref:ranksvm} is to assume non-noisy input pairs.
Our algorithm is based on noisy inputs, where the noise is characterized by an unknown, high-dimensional distribution.
Yet we are still able to show the asymptotic consistency of the obtained ranker in Sec.\ref{sec:analysis}.

\noindent{\bf Remark 2:} 
For the learning-to-rank step Eq.(\ref{eq:ranksvm_standard}), we equip the RKHS $H$ with the RBF kernel $k(x,x') = \exp\left( -\dfrac{\lVert x-x'\rVert^2}{\sigma^2} \right)$. The algorithm parameter $C$ and RBF kernel bandwidth $\sigma$ can be selected through cross validation, since this step is a supervised learning procedure based on input pairs. We use cross validation and adopt the weighted pairwise disagreement loss (WPDL) from \citep{ref:RDPS2012} for this purpose.

\noindent{\bf Remark 3:} 
The number of quantization levels, $m$, impacts training complexity as well as performance.
When $m=n$, all $n \choose 2$ preference pairs are generated. This scenario has the highest training complexity. Furthermore, large $m$ tends to more closely follow rankings obtained from $K$-NN distances, which may or may not be desirable. $K$-NN distances can be noisy for small training data sizes.
While this raises the question of choosing $m$, we observe that setting $m$ to be $3\sim5$ works fairly well in practice. We fix $m=3$ in all of our experiments in Sec.\ref{sec:exp}.
$m=2$ is insufficient to allow flexible false alarm control, as will be demonstrated next.

\noindent{\bf Remark 4:} 
Let us mention the connection with ranking SVM. Ranking SVM is an algorithm
for the learning-to-rank problem, whose goal is to rank unseen objects based
on given training data and their corresponding orderings. Our novelty lies
in building a connection between learning-to-rank and anomaly detection: \\
(1) While there is no such natural ``input ordering'' in anomaly detection, we 
create this order on training samples through their $K$-NN scores. \\
(2)
When we apply our detector on an unseen object it produces a score
that approximates the unseen object's $p$-value. We theoretically
justify this linkage, namely, our predictions fall in the right
quantile (Theorem \ref{thm:AD_finite_sample}). We also empirically
show test-stage computational benefits. 


\subsection{False alarm control}
In this section we illustrate through a toy example how our learning method approximates minimum volume sets.
We consider how different levels of quantization impact level sets. We will show that for appropriately chosen quantization levels
our algorithm is able to simultaneously approximate multiple level sets. In Section~\ref{sec:analysis} we show that the normalized score Eq.(\ref{estimate_pn}), takes values in $[0,1]$, and converges to the $p$-value function. Therefore we get a handle on the false alarm rate. So null hypothesis can be rejected at different levels simply by thresholding $\hat{R}_n(\eta)$.

{\bf Toy Example:} \\
We present a simple example in Fig. 1
to demonstrate this point. The nominal density $ f \sim 0.5 \mathcal{N} \left(
\left[ 4; 1 \right],
0.5 I
 \right)
+
0.5 \mathcal{N} \left(
\left[ 4 ; -1 \right],
0.5 I
 \right)
$.
We first consider single-bit quantization ($m=2$) using RBF kernels ($\sigma=1.5$) trained with pairwise preferences between $p$-values above and below $3$\%. 
This yields a decision function $\hat{g}_2(\cdot)$. The standard way is to claim anomaly when $\hat{g}_2(x)<0$, corresponding to the outmost orange curve in (a). We then plot different level curves by varying $c>0$ for $\hat{g}_2(x)=c$, which appear to be scaled versions  of the orange curve. 
While this quantization appears to work reasonably for $\alpha$-level sets with $\alpha=3$\%, for a different desired $\alpha$-level, the algorithm would have to retrain with new preference pairs. On the other hand, we also train rankAD with $m=3$ (uniform quantization) and obtain the ranker $\hat{g}_3(\cdot)$. We then vary $c$ for $\hat{g}_3(x)=c$ to obtain various level curves shown in (b), all of which surprisingly approximate the corresponding density level sets well.
We notice a significant difference between the level sets generated with 3 quantization levels in comparison to those generated for two-level quantization.
In the appendix we show that $\hat{g}(x)$ asymptotically preserves the ordering of the density, and from this conclude that our score function $\hat{R}_n(x)$ approximates multiple density level sets ($p$-values). Also see Section \ref{sec:analysis} for a discussion of this.
However in our experiments it turns out that we just need $m=3$ quantization levels instead of $m=n$ ($n \choose 2$ pairs) to achieve flexible false alarm control and do not need any re-training.

\begin{figure*}[htb!]
\vspace{-0.1in}
\begin{centering}
\begin{minipage}[t]{.44\textwidth}
\includegraphics[width = 1\textwidth]{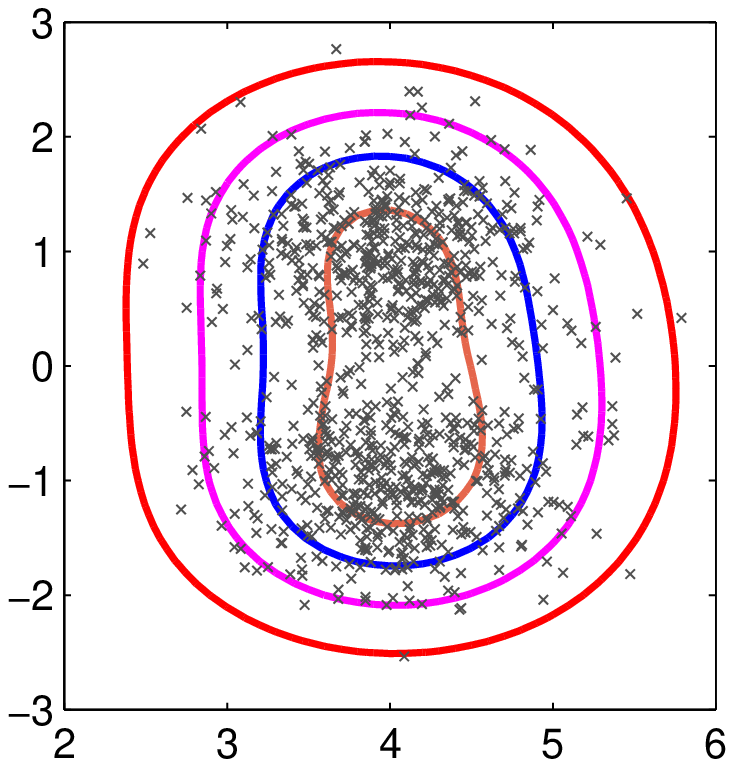}
\makebox[6 cm]{(a) Level curves ($m=2$)}\medskip
\end{minipage}
\begin{minipage}[t]{.44\textwidth}
\includegraphics[width = 1\textwidth]{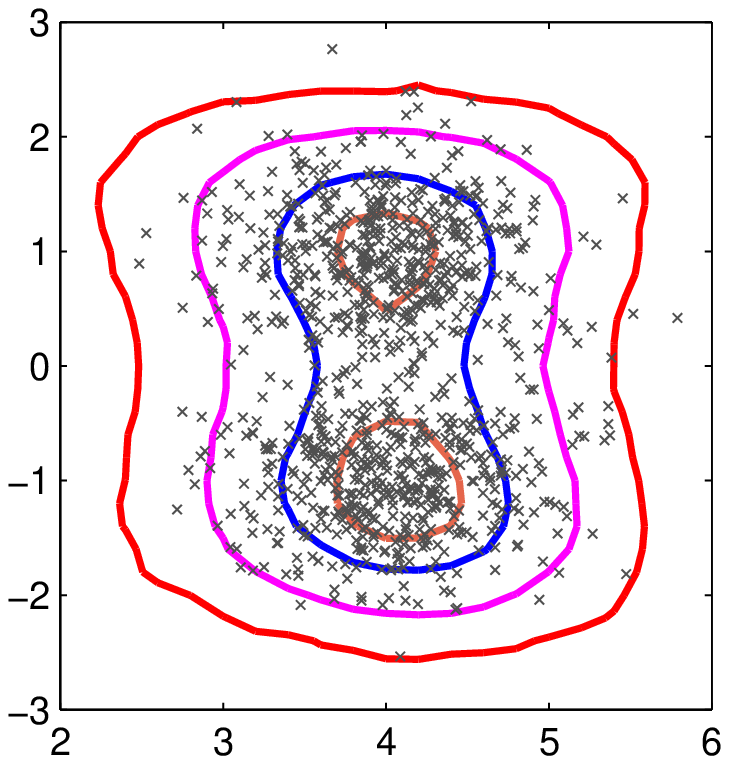}
\makebox[6 cm]{(b) Level curves ($m=3$)}\medskip
\end{minipage}
\vspace{-0.1in}
\caption{\small Level curves of rankAD for different quantization levels. 1000 i.i.d. samples are drawn from a 2-component Gaussian mixture density. Left figure(a) depicts performance with single-bit quantization ($m=2$). To learn rankAD we quantized preference pairs at 3\% and $\sigma=1.5$ in our RBF kernel. Right figure(b) shows rankAD with 3-levels of quantization and $\sigma=1.5$.  (a) shows level curves obtained by varying the offset $c$ for $\hat{g}_2(x)=c$. Only the outmost curve ($c=0$) approximates the oracle density level set well while  the inner curves ($c>0$) appear to be scaled versions of outermost curve. (b) shows level curves obtained by varying $c$ for $\hat{g}_3(x)=c$. Interestingly we observe that the inner most curve approximates peaks of the mixture density. }
\end{centering}
\vspace{-0.1in}
\label{fig:level_curve}
\end{figure*}

\subsection{Time Complexity}
For training, the rank computation step requires computing all pair-wise distances among nominal points $O(dn^2)$, followed by sorting for each point $O(n^2\log n)$.
So the training stage has the total time complexity $O(n^2(d+\log n)+T)$, where $T$ denotes the time of the pair-wise learning-to-rank algorithm.
At test stage, our algorithm only evaluates $\hat{g}(\eta)$ on $\eta$ and does a binary search among $\hat{g}(x_1),\ldots,\hat{g}(x_n)$. The complexity is $O(ds +\log n)$, where $s$ is the number of support vectors. This has some similarities with one-class SVM where the complexity scales with the number of support vectors \citep{ref:oc_svm2001}. Note that in contrast nearest neighbor-based algorithms, K-LPE, aK-LPE or BP-$K$-NNG \citep{ref:Manqi2009,ref:Jing2012,ref:knn_2011}, require $O(nd)$ for testing one point. It is worth noting that $s\leq n$ comes from the ``support pairs'' within the input preference pair set. 
Practically we observe that for most data sets $s$ is much smaller than $n$ in the experiment section, leading to significantly reduced test time compared to aK-LPE, as shown in Table.1.
It is worth mentioning that distributed techniques for speeding up computation of $K$-NN distances \citep{ref:Bhaduri11} can be adopted to further reduce test stage time.


\section{Analysis}\label{sec:analysis}
In this section we present the theoretical analysis of our ranking-based anomaly detection approach.

\subsection{Asymptotic Consistency}\label{subsec:AD_asymptotic}


As mentioned earlier in the paper, it is shown in \citep{ ref:Jing2012}  that the average $K$-NN distance statistic
converges to the $p$-value function:
\begin{thm}\label{thm:consistency_step1}
With $K=O(n^{0.5})$, we have
$\lim_{n\to\infty}R_n(\eta) = p(\eta)$. 
\end{thm}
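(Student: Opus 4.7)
The plan is to reduce the statement about the empirical rank statistic $R_n(\eta)$ to two separate convergences: a pointwise convergence of the $K$-NN distance statistic $G_{\mathbf{x}}$ to a (monotone) function of the density $f_0$, and a standard law-of-large-numbers argument once the indicators inside $R_n$ are expressed in terms of $f_0$. This is essentially the route taken in \citep{ref:Manqi2009,ref:Jing2012}, and the proof is a careful assembly of those ingredients.

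First, I would invoke the classical asymptotic result on average $K$-NN distances: under mild regularity of $f_0$ (continuity at $\eta$ and at almost every $x$ in the support), with $K\to\infty$ and $K/n\to 0$ (which holds for $K=O(n^{1/2})$), one has
\begin{equation*}
\Bigl(\tfrac{n}{K}\Bigr)^{1/d} G_{\mathbf{x}}(x)\;\longrightarrow\; \bigl(c_d\, f_0(x)\bigr)^{-1/d}\quad \text{a.s.},
\end{equation*}
where $c_d$ is the volume of the unit Euclidean ball. This says $G_{\mathbf{x}}$ is, up to a common deterministic scaling, an order-preserving estimator of $1/f_0^{1/d}$. In particular, for any pair $(\eta,x_i)$ with $f_0(\eta)\neq f_0(x_i)$, eventually $\mathbf{1}_{\{G_{\mathbf{x}}(\eta)<G_{\mathbf{x}}(x_i)\}}=\mathbf{1}_{\{f_0(x_i)<f_0(\eta)\}}$.

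Second, for the target quantity I would rewrite
\begin{equation*}
R_n(\eta)=\frac{1}{n}\sum_{i=1}^n \mathbf{1}_{\{f_0(x_i)<f_0(\eta)\}} + E_n(\eta),
\end{equation*}
where $E_n(\eta)$ collects the indices at which $G_{\mathbf{x}}$ and $f_0$ disagree in ordering. Since $x_1,\ldots,x_n$ are i.i.d.\ from $f_0$, the strong law of large numbers gives the first term converging to $P(f_0(X)<f_0(\eta))=p(\eta)$ a.s., using the technical assumption from \citep{ref:Manqi2009} that the level set $\{x : f_0(x)=f_0(\eta)\}$ has Lebesgue (hence $P$-) measure zero (this is where ``technical conditions on $f_0$'' enter). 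So the remaining task is to show $E_n(\eta)\to 0$ a.s.

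The main obstacle is this error term, because the $G_{\mathbf{x}}(x_i)$ share the sample $\mathbf{x}$ and therefore are not independent across $i$. To handle it I would split indices into a ``bulk'' set $B_{n,\delta}=\{i:|f_0(x_i)-f_0(\eta)|>\delta\}$ and its complement. On the complement, a simple continuity/absolute-continuity argument bounds the fraction of such indices by $P(|f_0(X)-f_0(\eta)|\le\delta)$, which can be made arbitrarily small by choosing $\delta$ small. On the bulk, I would use a uniform version of the $K$-NN distance convergence (of the type established in \citep{ref:Manqi2009,ref:Jing2012}): on compact subsets bounded away from the level set $\{f_0=f_0(\eta)\}$, the scaled statistic $(n/K)^{1/d}G_{\mathbf{x}}(\cdot)$ converges uniformly a.s.\ to $(c_d f_0)^{-1/d}$. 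This uniformity forces disagreements on $B_{n,\delta}$ to vanish. Letting $\delta\to 0$ along a subsequence concludes $E_n(\eta)\to 0$ a.s., and combining with the LLN step yields the theorem. Since the result is quoted verbatim from \citep{ref:Jing2012}, the proof can in fact be shortened to an appeal to that paper, with the above sketch explaining why it works.
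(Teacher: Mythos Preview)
Your approach is plausible but differs substantially from the paper's own proof. The paper does not attempt to control the dependence among the $G_{\mathbf{x}}(x_i)$ through a uniform-convergence argument; instead it removes the dependence by design. Concretely, the paper splits the sample as $D=D_0\cup D_1\cup\cdots\cup D_{m_1}$ with $D_0=\{x_1,\dots,x_{m_1}\}$ and disjoint neighbor pools $D_j$ of size $m_2$, and redefines the rank as $R(u)=\frac{1}{m_1}\sum_{j=1}^{m_1}\mathbb{I}\{G(x_j;D_j)>G(u;D_j)\}$. Because each indicator depends on its own $D_j$, the summands are independent and Hoeffding gives concentration of $R(u)$ around $\mathbb{E}[R(u)]$ immediately. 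The remaining work is to show $\mathbb{E}[R(u)]\to p(u)$, which the paper does by bounding $\mathbb{E}[D_{(l)}(x)]$ against $A(x)=(l/(m c_d f(x)))^{1/d}$ (their Lemma~5) and then applying McDiarmid to $F_x=G(x)-G(u)$ to show $\mathcal{P}_{D_1}(G(u)<G(x))\approx \mathbb{I}\{f(u)>f(x)\}$ except on a thin density tube $\{|f(x)-f(u)|<\epsilon\}$, whose probability is controlled by the ``no flat regions'' assumption.

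So the contrast is: you keep the shared-sample statistic and pay for dependence with a uniform a.s.\ convergence claim for $(n/K)^{1/d}G_{\mathbf{x}}(\cdot)$ on compacta away from the critical level set; the paper engineers independence via sample splitting and pays instead with an explicit expectation analysis based on McDiarmid and Chernoff bounds. Your route is more direct and closer to a classical LLN argument, but the uniform-convergence step you invoke is the crux and is not something the paper proves or cites in that form; you would need to supply it separately. The paper's route yields explicit nonasymptotic error terms (in $l,m_2,d$ and the regularity constants) and pins down the growth condition on $K$ precisely, at the cost of the splitting device and a slightly modified statistic.
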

The goal of our rankAD algorithm is to learn the ordering of the $p$-value. 
This theorem therefore guarantees that asymptotically, the preference pairs generated as input to the rankAD algorithm are reliable. Note that the definition of $G$ in  \citep{ref:Jing2012} is slightly
different than the one  given in equation (\ref{$K$-NNstat}). However, for our purposes
this difference is not worth detailing. 

What we claim in this paper, and prove in the appendix, is the following consistency
result of our rankAD algorithm. Note that the use of quantization (c.f. Section \ref{sec:main_algo})
 does not affect the
conclusion of this theorem, hence we assume there is none. Indeed, quantization
is a computational tool. From a statistical asymptotic consistency perspective quantization is not an issue. 

\begin{thm}\label{thm:rank_main_consistency}
With $K=O(n^{0.5})$, 
as $n\rightarrow\infty$, $\hat{R}_n(\eta)\rightarrow p(\eta).$
\end{thm}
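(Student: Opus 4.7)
My plan is to split Theorem~\ref{thm:rank_main_consistency} into two linked statements: (i) the learned ranker $\hat g$ asymptotically reproduces the ordering of points induced by the density $f_0$ (equivalently by the $p$-value $p$), and (ii) once such an order-preserving property holds, the counting statistic $\hat R_n(\eta)$, read as an empirical CDF of $\hat g(X)$ evaluated at $\hat g(\eta)$, converges to $p(\eta)$ via a uniform law of large numbers. The decomposition is natural because $\hat R_n$ depends on the training data only through the scalar values $\{\hat g(x_i)\}$, so once $\hat g$ is well behaved we only need to understand the empirical distribution of its values.

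For (i), the starting point is Theorem~\ref{thm:consistency_step1}, which gives $R_n(x_i)\to p(x_i)$ almost surely. I would upgrade this into a uniform-in-$x$ statement so that the quantized labels $r_q(x_i)$ agree with the quantized $p$-value labels on all but a vanishing fraction of pairs in $\mathcal P$. This ensures that the empirical pairwise hinge risk minimized in Eq.~\ref{eq:ranksvm_standard} is close, for large $n$, to the oracle pairwise risk driven by $p$. Combining a Rademacher/covering-number deviation bound for the ranking-SVM objective with the universality of the RBF-kernel RKHS then yields that $\hat g$ is a consistent estimator of a Bayes pairwise ranker $g^\star$ that perfectly orders the $p$-values $f_0$-a.e.; in particular, $\mathrm{sgn}(\hat g(x)-\hat g(y))$ converges to $\mathrm{sgn}(p(x)-p(y))$ for $f_0$-a.e.\ pair $(x,y)$.

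For (ii), since $\hat g$ lies in a bounded RKHS ball determined by the regularization in Eq.~\ref{eq:ranksvm_standard}, the class of threshold indicators of $\hat g$ has controlled Rademacher complexity, so the uniform law of large numbers gives $\hat R_n(\eta)=\frac{1}{n}\sum_i \mathbf{1}\{\hat g(\eta)<\hat g(x_i)\}\to P_{X\sim f_0}(\hat g(X)>\hat g(\eta))$. Under the order-preservation from (i) and continuity of the push-forward distribution of $p(X)$ (implied by the regularity on $f_0$ already invoked in Theorem~\ref{thm:consistency_step1}), this limiting probability equals $p(\eta)$, completing the argument. The main obstacle is (i): the preference labels, the training inputs, and the regularization parameter all drift with $n$, so a direct appeal to a textbook ranking-SVM consistency theorem is not available; the delicate technical step is a triangular-array coupling of the noisy empirical risk (built from $R_n$) with the oracle pairwise risk (built from $p$), where the mislabeling rate is controlled by Theorem~\ref{thm:consistency_step1} and the approximation error is controlled by density of the RBF-kernel RKHS in the space of bounded continuous functions.
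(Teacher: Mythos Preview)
Your decomposition matches the paper's: (i) show $\hat g$ preserves the density ordering, and (ii) show any such order-preserving function makes the counting statistic converge to $p(\eta)$. The paper packages (i) as two lemmas: a concentration result (via McDiarmid and covering numbers, following Cucker--Smale) giving $E_{\bold x}[\RT(\hat g)]\to\min_{f\in H}\RP(f)$, and then a swap/convexity argument showing that any minimizer of the population pairwise hinge risk must order points by density (if $g(x_i)<g(x_j)$ while $f(x_i)>f(x_j)$, swapping the two values strictly lowers the risk; the tie case $g(x_i)=g(x_j)$ is ruled out using $L'(0)<0$). For (ii) the paper simply invokes the law of large numbers, without the uniform-in-threshold refinement you sketch.

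Two differences are worth flagging. First, the paper handles the noisy-label issue you call the ``delicate technical step'' informally: it effectively assumes the preference pairs are generated by the density itself (citing Theorem~\ref{thm:consistency_step1} to say the $R_n$-based labels are asymptotically reliable) and proves consistency for that oracle problem, rather than carrying out the triangular-array coupling you propose; so your treatment of this point is more careful than the paper's. Second, your plan does not spell out \emph{why} the population Bayes ranker $g^\star$ orders by density; in the paper this is where the real work sits (the swap argument and the $L'(0)<0$ condition on the surrogate), so you would need to supply an analogous step rather than take it as given. In net, your route is the same skeleton with different flesh: more explicit on label noise and uniform LLN, less explicit on the structural reason the limiting ranker must respect the density order.
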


The difficulty in this theorem arises from the fact that the score, $\hat{R}_n(\eta)$,
is based on the ranker, $\hat{g}$, which is learned from data with high-dimensional
noise. Moreover, the noise is distributed according to an {\it unknown} probability
measure.  For the proof of this theorem, we begin with the law of large numbers.
Suppose for any $n\geq 1$, a function $G$ is found such that
  $f(x_i)<f(x_j)\implies G(x_i)<G(x_j)$. Note that in Section~\ref{sec:scorefunc} we use $K$-NN distance surrogates which reverses the order but the effect is the same and should not cause any confusion. Then it can be shown that
\[
\frac{1}{n} \sum_{i=1}^n \textbf{1}_{\{G(x_i) < G(\eta)\}} \rightarrow p(\eta).
\]
Thus we wish to prove that the output of our rankAD algorithm
is such a function.

The first step in our proof is to show that the solution to our rankAD
algorithm, $\hat{g}$, is consistent   \citep{ref:Steinwart2001}.
 Fix an RKHS $H$ on the input
space $X\subset \mathbb{R}^d$ with RBF kernel $k$. 
We denote by $L$ the hinge loss. We may
write $\hat{g}$ as the solution to the following regularized minimization 
problem,
\[
\hat{g} = \arg \min_{f\in H} \mathcal{R}_{L,T}(f) +\lambda_n\lVert f \rVert_H^2,
\]
where 
$\mathcal{R}_{L,T}(f)=\frac{1}{n^2}\sum_{i,j}L(f(x_i)-f(x_j)).$
$T$ denotes the pairs from the sample $\bold{x}=\{x_1,\dots,x_n\}$, so this is a loss with respect
to the empirical measure. The expected risk is denoted 
\[
\mathcal{R}_{L,P}(f) = E_{\bold{x}}[\mathcal{R}_{L,T}(f)].
\]
Then consistency means that, under appropriate conditions as
$\lambda_n\to 0$ and $n\to \infty$ (see appendix), we have 
\begin{equation}\label{consistent}
E_{\bold{x}}[ \mathcal{R}_{L,T}(\hat{g})] \to \min_{f\in H} \mathcal{R}_{L,P}(f).
\end{equation}
The proof of this claim requires a concentration of measure result
relating $\mathcal{R}_{L,T}(f)$ to its expectation, $\mathcal{R}_{L,P}(f)$,
uniformly over $f\in H$. The argument follows closely that made in
\citep{ref:Smale2001}, except we make use of McDiarmid's inequality.

Finally we show that if $\hat{g}$ satisfies (\ref{consistent}), then 
it ranks samples according to their density: $f(x_i)>f(x_j)\implies
\hat{g}(x_i)>\hat{g}(x_j)$. 

\subsection{Finite-Sample Generalization Result}\label{subsec:finite_sample}
Based on a sample $\{x_1,\ldots,x_n\}$, our approach learns a ranker $g_{n}$, and computes the values $g_{n}(x_1),\ldots,g_{n}(x_n)$. Let
$g_n^{(1)}\leq g_n^{(2)}\leq \cdots \leq g_n^{(n)}$ be the ordered permutation of these values.
For a test point $\eta$, we evaluate $g_{n}(\eta)$ and compute $\hat{R}_n(\eta)$.
For a prescribed false alarm level $\alpha$, we define the decision region for claiming anomaly by
\begin{eqnarray*}
  R_\alpha &=& \{ x: \, \hat{R}_n(x) \leq \alpha  \} \\
   &=&  \{ x: \, \sum_{j=1}^{n} \textbf{1}_{\{ g_n(x)\leq g_n(x_j) \}} \leq \alpha n \}  \\
   &=&  \{ x: \, g_n(x) < g_n^{ \lceil \alpha n \rceil } \}
\end{eqnarray*}
where $\lceil \alpha n \rceil$ denotes the ceiling integer of $\alpha n$.

We give a finite-sample bound on the probability that a newly drawn nominal point $\eta$ lies in $R_\alpha$. In the following Theorem, $\mathcal{F}$ denotes a
real-valued function class of kernel based linear functions
equipped with the $\ell_{\infty}$ norm over
a finite sample $\mathbf{x}=\{x_1,\dots,x_n\}$:
\[
\lVert f \rVert _{\ell_{\infty}^{\mathbf{x}}} =\max_{x\in \mathbf{x}} |f(x)|.
\]
Note that $\mathcal{F}$ contain solutions
to an SVM-type problem, so we assume the output of our rankAD
algorithm, $g_n$, is an element of $\mathcal{F}$.
We let $\mathcal{N}(\gamma,\mathcal{F},n)$ denote the covering
number of $\mathcal{F}$ with respect to this norm (see appendix
for details).

\begin{thm}\label{thm:AD_finite_sample}
Fix a distribution $P$ on $\mathbb{R}^d$ and suppose $x_1,\dots,x_n$ are generated iid from $P$.
For $g\in\mathcal{F}$ let $g^{(1)}\leq g^{(2)}\leq \cdots \leq g^{(n)}$ be the ordered permutation of
$g(x_1),\dots,g(x_n)$. Then for such an $n$-sample, with probability $1-\delta$,
for any $g\in \mathcal{F}$, $1\leq m \leq n$ and sufficiently small $\gamma>0$,
\[
P\left\{x: g(x)< g^{(m)} - 2\gamma\right\} \leq \frac{m-1}{n} + \epsilon(n,k,\delta),
\]
where $\epsilon(n,k,\delta)=\frac{2}{n}(k+\log\frac{n}{\delta})$,
 $k=\lceil{\log\mathcal{N}(\gamma,\mathcal{F},2n)}\rceil$.
\end{thm}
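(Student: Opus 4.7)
The plan is to follow a classical double-sample plus covering-number argument in the spirit of Shawe--Taylor--Bartlett margin bounds, adapted to statements about order statistics rather than classification errors. The three ingredients are: (i) replace the true tail probability by an empirical tail on an independent ghost sample, (ii) discretize $\mathcal{F}$ in the sample-wise $\ell_\infty$ norm so that only finitely many functions need be controlled, and (iii) bound the discretized events by a symmetry/permutation argument on the joint $2n$-sample.

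I would first draw a ghost sample $\mathbf{x}'=\{x_1',\dots,x_n'\}$ i.i.d.\ from $P$. For fixed $g$ and $m$, if $P\{x: g(x)<g^{(m)}-2\gamma\}$ exceeds $(m-1)/n+\epsilon$, then a one-sided binomial tail bound shows that with probability at least $1/2$ the ghost sample has at least $(m-1)+n\epsilon/2$ indices $i$ with $g(x_i')<g^{(m)}-2\gamma$. So, up to a factor of two in the failure probability, it suffices to bound $\Pr\{\exists g,m:\ |\{i:g(x_i')<g^{(m)}-2\gamma\}|\ge m-1+n\epsilon/2\}$. Next, take a minimal $\gamma$-cover $\tilde{\mathcal{F}}$ of $\mathcal{F}$ in $\ell_\infty^{\mathbf{x}\cup\mathbf{x}'}$, of size $\mathcal{N}(\gamma,\mathcal{F},2n)$, and for each $g$ pick $\tilde g$ in the cover with $\max_{x\in\mathbf{x}\cup\mathbf{x}'}|g(x)-\tilde g(x)|\le\gamma$. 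The key geometric observation is that the $m$-th order statistic is $1$-Lipschitz under $\ell_\infty$ perturbations on $\mathbf{x}$, so $\tilde g^{(m)}\ge g^{(m)}-\gamma$, and therefore $g(x_i')<g^{(m)}-2\gamma$ forces $\tilde g(x_i')<\tilde g^{(m)}$. This is precisely where the $2\gamma$ in the statement is spent.

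It then remains to control, for each fixed $\tilde g$ in the cover and each $m$, the event that more than $m-1+n\epsilon/2$ of the ghost points lie below $\tilde g^{(m)}$. Conditioning on the unordered multiset $\mathbf{x}\cup\mathbf{x}'$, by exchangeability each of the $\binom{2n}{n}$ train/ghost splits is equally likely, so this count is a hypergeometric random variable whose mean is at most $m-1$. A Hoeffding-type tail bound then gives a deviation probability of order $\exp(-n\epsilon/2)$. Union-bounding over the $\mathcal{N}(\gamma,\mathcal{F},2n)$ cover elements and the $n$ values of $m$, together with the factor $2$ lost in symmetrization, yields $2n\,\mathcal{N}(\gamma,\mathcal{F},2n)\exp(-n\epsilon/2)\le\delta$; rearranging gives $\epsilon=\frac{2}{n}\bigl(k+\log(n/\delta)\bigr)$ with $k=\lceil\log\mathcal{N}(\gamma,\mathcal{F},2n)\rceil$, exactly as claimed.

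The main obstacle I anticipate is calibrating the symmetrization and the permutation tail so that the anchor appears as $(m-1)/n$ rather than $m/n$; this is what allows the count of ghost points to be compared to $m-1$, and it couples tightly with the $\gamma$-shift of the $m$-th order statistic under the $\ell_\infty$ cover. A minor but necessary side task is to verify that RKHS balls of the type produced by the rankAD SVM have polynomial covering numbers in $n$, so that $k$ contributes only logarithmically to $\epsilon$; for the RBF kernel this follows from standard entropy estimates and need not be redone here.
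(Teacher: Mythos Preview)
Your proposal is correct and follows essentially the same approach as the paper. The paper's proof invokes Vapnik's double-sample symmetrization lemma to pass to a ghost sample with empirical fraction anchored at $(m-1)/n$, and then defers the covering/permutation step entirely to Theorem~12 of Sch\"olkopf et al.\ (2001); you have simply unpacked that reference, including the $1$-Lipschitz behaviour of the $m$-th order statistic under $\ell_\infty$ perturbation and the union bound over $m$ that produces the $\log(n/\delta)$ term.
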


\subsubsection*{Remarks}
\vspace{-0.1in}
\noindent
(1) To interpret the theorem notice that the LHS is precisely the probability that a test point drawn from the nominal distribution has a score below the $\alpha\approx \frac{m-1}{n}$ percentile. We see that this probability is bounded from above by $\alpha$ plus an error term that asymptotically approaches zero. This theorem is true irrespective of $\alpha$ and so we have shown that we can simultaneously approximate multiple level sets. \\
(2)
A similar inequality holds for the event giving a lower bound on $g(x)$.
However, let us emphasize that lower bounds are not meaningful for
our context.
The ranks $g^{(1)}\leq g^{(2)}\leq\dots\leq
g^{(n)}$ are sorted in increasing order. A smaller $g(x)$ signifies
that $x$ is more of an outlier. Points below the lowest rank $g^{(1)}$
correspond to the most extreme outliers.


\section{Experiments}
\label{sec:exp}
In this section, we carry out point-wise anomaly detection experiments on synthetic and real-world data sets. We compare our ranking-based approach against density-based methods BP-$K$-NNG \citep{ref:knn_2011} and aK-LPE \citep{ref:Jing2012}, and two other state-of-art methods based on random sub-sampling, isolated forest \citep{ref:isolation_forest} (iForest) and massAD \citep{ref:massAD}.
One-class SVM \citep{ref:oc_svm2001} is included as a baseline.


\subsection{Implementation Details}
In our simulations, the Euclidean distance is used as distance metric for all candidate methods. For one-class SVM the lib-SVM codes \citep{ref:libsvm}  are used. The algorithm parameter and the RBF kernel parameter for one-class SVM are set using the same configuration as in \citep{ref:massAD}.
For iForest and massAD, we use the codes from the websites of the authors, with the same configuration as in \citep{ref:massAD}.
For aK-LPE we use the average $k$-NN distance Eq.(\ref{$K$-NNstat}) with fixed $k=20$ since this appears to work better than the actual $K$-NN distance of \citep{ref:Manqi2009}. Note that this is also suggested by the convergence analysis in Thm~1  \citep{ref:Jing2012}. For BP-$K$-NNG, the same $k$ is used and other parameters are set according to \citep{ref:knn_2011}.

For our rankAD approach we follow the steps described in Algorithm 1. We first calculate the ranks $R_n(x_i)$ of nominal points according to Eq.(3) based on a$K$-LPE.
We then quantize $R_n(x_i)$ uniformly into $m$=3 levels $r_q(x_i)\in\{1,2,3\}$ and generate pairs $(i, j)\in\mathcal{P}$ whenever $r_q(x_i)>r_q(x_j)$.
We adapt the routine from \citep{ref:ranksvm_chapelle} and extend it to a kernelized version for the learning-to-rank step Eq.(\ref{eq:ranksvm_standard}).
The trained ranker is then adopted in Eq.(4) for test stage prediction.
We point out some implementation details of our approach as follows.

\noindent
{\it (1) Resampling:} We follow \citep{ref:Jing2012} and adopt the U-statistic based resampling to compute aK-LPE ranks. We randomly split the data into two equal parts and use one part as ``nearest neighbors'' to calculate the ranks (Eq.(\ref{$K$-NNstat},~\ref{estimate_p})) for the other part and vice versa. Final ranks are averaged over 20 times of resampling.

\noindent
{\it (2) Quantization levels \& K-NN} For real experiments with 2000 nominal training points, we fix $k=20$ and $m=3$. These values are based on noting that the detection performance does not degrade significantly with smaller quantization levels for synthetic data. The $k$ parameter in $K$-NN is chosen to be 20 and is based on Theorem~\ref{thm:consistency_step1} and results from synthetic experiments (see below).

 \noindent      
{\it (3) Cross Validation using pairwise disagreement loss:} For the rank-SVM step we use a 4-fold cross validation to choose the parameters $C$ and $\sigma$. We vary $C\in \{0.001,0.003,0.01,\dots,300,1000\}$, and the RBF kernel parameter $\sigma \in \Sigma = \{2^i\tilde{D}_K,\,i=-10,-9,\dots,9,10\}$, where $\tilde{D}_K$ is the average $20$-NN distance over nominal samples.
      The pair-wise disagreement indicator loss is adopted from \citep{ref:RDPS2012} for evaluating rankers on the input pairs:
  \begin{equation*}
    L(f) = \sum_{(i,j)\in \mathcal{P}} \textbf{1}_{\{ f(x_i) < f(x_j) \}} 
  \end{equation*}  
%
%
%
%
Reported AUC performances are averaged over 5 runs.


\subsection{Synthetic Data sets}
%
%
We first apply our method to a Gaussian toy problem, where the nominal density is: \[ f_0 \sim 0.2 \mathcal{N} \left(
\left[ 5; 0 \right],
\left[ 1, 0 ; 0, 9 \right]
 \right)
+
0.8 \mathcal{N} \left(
\left[ -5 ; 0 \right],
\left[ 9 , 0; 0 , 1 \right]
 \right).
\]
Anomaly follows the uniform distribution within $\{(x,y):\,\,-18\leq x\leq 18, -18\leq y\leq 18\}$. The goal here is to understand the impact of different parameters ($k$-NN parameter and quantization level) used by RankAD. %
Fig.2 shows the level curves for the estimated ranks on the test data. As indicated by the asymptotic consistency (Thm.2) and the finite sample analysis (Thm.3), the empirical level curves of rankAD approximate the level sets of the underlying density quite well.
\begin{figure}[htbp]\label{fig:gaussian}
\vspace{-0.15in}
\begin{centering}
\begin{minipage}[t]{.45\textwidth}
\includegraphics[width = 1\textwidth]{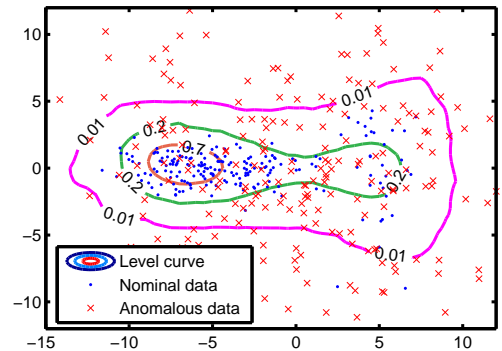}
\end{minipage}
\vspace{-0.1in}
\caption{\small Level sets for the estimated ranks. 600 training points are used for training.}
\end{centering}
\vspace{-0.15in}
\end{figure}
%
We vary $k$ and $m$ and evaluate the AUC performances of our approach shown in Table \ref{tab:sensitivity}.
The Bayesian AUC is obtained by thresholding the likelihood ratio using the generative densities.
From Table \ref{tab:sensitivity} we see the detection performance is quite insensitive to the $k$-NN parameter and the quantization level parameter $m$, and for this simple synthetic example is close to Bayesian performance.

\begin{table}[!htbp]
\vspace{-0.1in}
\caption{ \small AUC performances of Bayesian detector, aK-LPE, and rankAD with different values of $k$ and $m$. 600 training points are used for training. For test 500 nominal and 1000 anomalous points are used. \label{tab:sensitivity} }
\begin{center}
\begin{tabular}{|c||c|c|c|c|}
  \hline
  AUC       & k=5  &  k=10  & k=20 & k=40 \\
  \hline\hline
  m=3       & 0.9206  &  0.9200  &  0.9223  &  0.9210 \\    \hline
  m=5       & 0.9234  &  0.9243  &  0.9247  &  0.9255 \\    \hline
  m=7       & 0.9226  &  0.9228  &  0.9234  &  0.9213 \\    \hline
  m=10      & 0.9201  &  0.9208  &  0.9244  &  0.9196 \\    \hline\hline
  aK-LPE    & 0.9192  &  0.9251  &  0.9244  &  0.9228 \\    \hline
  Bayesian  & 0.9290  &  0.9290  &  0.9290  &  0.9290 \\    \hline
\end{tabular}
\vspace*{-0.2in}
\end{center}
\end{table}
\begin{table}[!htbp]
\caption{ \small Data characteristics of the data sets used in experiments. $N$ is the total number of instances. $d$ the dimension of data. The percentage in brackets indicates the percentage of anomalies among total instances. \label{tab:data_sets} }
\begin{center}
\begin{tabular}{|c||c|c|c|}
  \hline
  data sets     & $N$  &  $d$  & anomaly class  \\
  \hline\hline
  Annthyroid     & 6832  &  6  &  classes 1,2 \\    \hline
  Forest Cover   & 286048  &  10  &  class 4 vs. class 2  \\ \hline
  HTTP           & 567497  &  3   &  attack  \\ \hline
  Mamography     & 11183   &  6   &  class 1  \\ \hline
  Mulcross       & 262144  &  4   &  2 clusters  \\ \hline
  Satellite      & 6435    &  36  &  3 smallest classes  \\ \hline
  Shuttle        & 49097   &  9   &  classes 2,3,5,6,7  \\ \hline
  SMTP           & 95156  &  3   &  attack   \\
  \hline
\end{tabular}
\end{center}
\vspace{-10pt}
\end{table}
\subsection{Real-world data sets}
We conduct experiments on several real data sets used in \citep{ref:isolation_forest} and \citep{ref:massAD}, including 2 network intrusion data sets HTTP and SMTP from \citep{ref:Yamanishi00}, Annthyroid, Forest Cover Type, Satellite, Shuttle from UCI repository \citep{ref:UCI}, Mammography and Mulcross from \citep{ref:Rocke96}. Table \ref{tab:data_sets} illustrates the characteristics of these data sets.

\begin{table*}[t]
\vspace{-0.2in}
\caption{Anomaly detection AUC performance and test stage time of various methods.}
\begin{center}
\begin{tabular}{|c|c||c|c|c|c|c|c|}
  \hline
  \multicolumn{2}{|c||}{Data Sets} &   rankAD  &   one-class svm  &   BP-$K$-NNG   & aK-LPE & iForest & massAD \\
  \hline\hline
  \multirow{8}{*}{AUC}
  &    Annthyroid        & 0.844         & 0.681   & 0.823   &  0.753  & {\bf 0.856}   & 0.789 \\
  &   Forest Cover      & {\bf 0.932}   & 0.869   & 0.889   &  0.876  & 0.853         & 0.895 \\
  &    HTTP              & {\bf 0.999}   & 0.998   & 0.995   & {\bf 0.999}  & 0.986         & 0.995 \\
  &    Mamography        & {\bf 0.909}   & 0.863   & 0.886   &  0.879  & 0.891         & 0.701 \\
  &    Mulcross          & {\bf 0.998}   & 0.970   & 0.994   & {\bf 0.998}  & 0.971         & 0.998 \\
  &    Satellite         & {\bf 0.885}   & 0.774   & 0.872   &  0.884  & 0.812         & 0.692 \\
  &    Shuttle           & {\bf 0.996}   & 0.975   & 0.985   &  0.995  & 0.992         & 0.992 \\
  &   SMTP              & {\bf 0.934}   & 0.751   & 0.902   &  0.900  & 0.869         & 0.859 \\
  \hline
  \multirow{8}{*}{test time}
    & Annthyroid        & 0.338  & 0.281 & 2.171	& 2.173  & 1.384  & 0.030 \\
    & Forest Cover      & 1.748  & 1.638 & 8.185	& 13.41  & 7.239  & 0.483 \\
    & HTTP              & 0.187  & 0.376 & 2.391	& 11.04  & 5.657  & 0.384 \\
    & Mamography        & 0.237  & 0.223 & 0.981	& 1.443  & 1.721  & 0.044 \\
    & Mulcross          & 2.732  & 2.272 & 8.772 	& 13.75  & 7.864  & 0.559 \\
    & Satellite         & 0.393  & 0.355 & 0.976 	& 1.199  & 1.435  & 0.030 \\
    & Shuttle           & 1.317  & 1.318 & 6.404 	& 7.169  & 4.301  & 0.186 \\
    & SMTP              & 1.116  & 1.105 & 7.912 	& 11.76  & 5.924  & 0.411 \\
  \hline
\end{tabular}
\end{center}
\label{tab:real_AUC}
\vspace{-10pt}
\end{table*}

We randomly sample 2000 nominal points for training. The rest of the nominal data and all of the anomalous data are held for testing. Due to memory limit, at most 80000 nominal points are used at test time. The time for testing all test points and the AUC performance are reported in Table \ref{tab:real_AUC}.

We observe that while being faster than BP-$K$-NNG, aK-LPE and iForest, and comparable to one-class SVM during test stage, our approach also achieves superior performance for all data sets.
The density based aK-LPE and BP-$K$-NNG has somewhat good performance, but its test-time degrades with training set size.
massAD is very fast at test stage, but has poor performance for several data sets.

{\it One-class SVM Comparison}
The baseline one-class SVM has good test time due to the similar $O(dS_1)$ test stage complexity where $S_1$ denotes the number of support vectors.
However, its detection performance is pretty poor, because one-class SVM training is in essence approximating one single $\alpha$-percentile density level set. $\alpha$ depends on the parameter of one-class SVM, which essentially controls the fraction of points violating the max-margin constraints \citep{ref:oc_svm2001}.
Decision regions obtained by thresholding with different offsets are simply scaled versions of that particular level set.
Our rankAD approach significantly outperforms one-class SVM, because it has the ability to approximate different density level sets.

{\it aK-LPE \& BP-$K$-NNG Comparison:}
Computationally RankAD significantly outperforms density-based aK-LPE and BP-$K$-NNG, which is not surprising given our discussion in Sec.4.3.
Statistically, RankAD appears to be marginally better than aK-LPE and BP-$K$-NNG for many datasets and this requires more careful reasoning. To evaluate the statistical significance  of the reported test results we note that the number of test samples range from 5000-500000 test samples with at least 500 anomalous points. Consequently, we can bound test-performance to within 2-5\% error with 95\% confidence ($<2\%$ for large datasets and $<5\%$ for the smaller ones (Annthyroid, Mamography, Satellite) ) using standard extension of known results for test-set prediction~\citep{langford05}. After accounting for this confidence RankAD is marginally better than aK-LPE and BP-$K$-NNG statistically. For aK-LPE we use resampling to robustly ranked values (see Sec.~6.1) and for RankAD we use cross-validation (CV) (see Sec.~6.1) for rank prediction. Note that we cannot use CV for tuning predictors for detection because we do not have anomalous data during training. All of these arguments suggests that the regularization step in RankAD results in smoother level sets and better accounts for smoothness of true level sets (also see Fig~\ref{fig:gaussian}) in some cases, unlike NN methods. 

\section{Conclusions}
\label{sec:con}
We presented a novel anomaly detection framework based on combining statistical density information with a discriminative ranking procedure.
Our scheme learns a ranker over all nominal samples based on the $k$-NN distances within the graph constructed from these nominal points. This is achieved through a pair-wise learning-to-rank step, where the inputs are preference pairs $(x_i,x_j)$ and asymptotically models the situation that data point $x_i$ is located in a higher density region relative to $x_j$. 
%
We then show the asymptotic consistency of our approach, which allows for flexible false alarm control during test stage.
We also provide a finite-sample generalization bound on the empirical false alarm rate of our approach.
Experiments on synthetic and real data sets demonstrate our approach has state-of-art statistical performance as well as low test time complexity.



\bibliographystyle{plainnat}
\bibliography{ranksvm}

\noindent
{\bf Acknowledgment:}
\small {This work is supported by the U.S. DHS Grant 2013-ST-061-ED0001 and US NSF under awards 1218992,  1320547 respectively. The views and conclusions contained in this document are those of the authors and should not be interpreted as necessarily representing the official policies, either expressed or implied, of the agencies.
}

\onecolumn
\section*{Appendix: Proofs of Theorems}

For ease of development, let $n=m_1(m_2+1)$, and divide $n$ data points into: $D=D_0 \cup  D_1 \cup ... \cup D_{m_1}$, where $D_0=\{x_1,...,x_{m_1}\}$, and each $D_j, j=1,...,m_1$ involves $m_2$ points. $D_j$ is used to generate the statistic $\eta$ for $u$ and $x_j\in D_0$, for $j=1,...,m_1$. $D_0$ is used to compute the rank of $u$:
\begin{equation}
    R(u) = \frac{1}{m_1}\sum_{j=1}^{m_1} \mathbb{I}_{\{ G(x_j;D_j)>G(u;D_j) \}}
\end{equation}
We provide the proof for the statistic $G(u)$ of the following form 
\begin{eqnarray}
  G(u;D_j) &=& \frac{1}{l}\sum^{l+\lfloor \frac{l}{2} \rfloor}_{i=l-\lfloor \frac{l-1}{2} \rfloor}\left( \frac{l}{i} \right)^{\frac{1}{d}}D_{(i)}(u).
\end{eqnarray}
where $D_{(i)}(u)$ denotes the distance from $u$ to its $i$-th nearest neighbor among $m_2$ points in $D_j$. Practically we can omit the weight and use the average of 1-st to $l$-st nearest neighbor distances as shown in Sec.3.

\textbf{Regularity conditions:} $f(\cdot)$ is continuous and lower-bounded: $f(x) \geq f_{min}>0$. It is smooth, i.e. $||\nabla f(x)||\leq\lambda$, where $\nabla f(x)$ is the gradient of $f(\cdot)$ at $x$. Flat regions are disallowed, i.e. $\forall x \in \mathbb{X}$, $\forall \sigma>0$, $\mathcal{P}\left\{y: |f(y)-f(x)|<\sigma\right\}\leq M\sigma$, where $M$ is a constant.

\section*{Proof of Theorem 1}

The proof involves two steps:
\begin{itemize}
  \item[1.] The expectation of the empirical rank $\mathbb{E}\left[R(u)\right]$ is shown to converge to $p(u)$ as $n\rightarrow\infty$.
  \item[2.] The empirical rank $R(u)$ is shown to concentrate at its expectation as $n\rightarrow\infty$.
\end{itemize}
The first step is shown through Lemma \ref{lem:expectation}. For the second step, notice that the rank $R(u) = \frac{1}{m_1}\sum_{j=1}^{m_1} Y_j$, where $Y_j = \mathbb{I}_{\{ \eta(x_j;D_j)>\eta(u;D_j) \}}$ is independent across different $j$'s, and $Y_j \in [0,1]$. By Hoeffding's inequality, we have:
\begin{equation}
    \mathbb{P}\left( | R(u) - \mathbb{E}\left[R(u)\right] | > \epsilon \right) < 2\exp\left( -2m_1\epsilon^2 \right)
\end{equation}
Combining these two steps finishes the proof.

\begin{lem}\label{lem:expectation}
By choosing $l$ properly, as $m_2\rightarrow\infty$, it follows that,
$$ | \mathbb{E}\left[R(u)\right] - p(u)| \longrightarrow 0$$
\end{lem}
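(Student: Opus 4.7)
\textbf{Proof plan for Lemma \ref{lem:expectation}.} The plan is to reduce the indicator event to a comparison of densities, and then use a concentration argument for the weighted $k$-NN distance statistic $G$. First, observe that because the subsample $D_j$ is independent of everything used to build the other $D_i$'s and because the points $x_j\in D_0$ are i.i.d.\ from $f_0$, the random variables $Y_j=\mathbb{I}_{\{G(x_j;D_j)>G(u;D_j)\}}$ are i.i.d.\ in $j$. Hence
\[
\mathbb{E}[R(u)] \;=\; \mathbb{P}\bigl(G(x_1;D_1)>G(u;D_1)\bigr).
\]
So it suffices to show that the right-hand side tends to $p(u)=\mathbb{P}(f_0(x_1)\le f_0(u))$.

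The next step is to establish that $G(\cdot;D_j)$ is an asymptotically deterministic functional of $f_0$ at the evaluation point. Under the smoothness assumption on $f_0$, a standard calculation for the $i$-th nearest-neighbor distance in a ball of mass $i/m_2$ yields $D_{(i)}(y)\approx (i/(m_2 c_d f_0(y)))^{1/d}$, where $c_d$ is the volume of the unit Euclidean ball. The weighting $(l/i)^{1/d}$ in the definition of $G$ is designed precisely so that each term in the sum has the same leading asymptotic value, giving
\[
G(y;D_j) \;=\; c_d^{-1/d}\Bigl(\tfrac{l}{m_2}\Bigr)^{1/d} f_0(y)^{-1/d} \,(1+o_{\mathbb{P}}(1)),
\]
provided $l=l(m_2)\to\infty$ and $l/m_2\to 0$. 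A bias/variance bound (expanding $f_0$ around $y$ using the Lipschitz bound $\|\nabla f\|\le\lambda$ for the bias, and using the order-statistic variance $O(1/l)$ for the stochastic term) makes this quantitative: for any $\delta>0$, one can choose $l$ so that
\[
\mathbb{P}\bigl(|G(y;D_j)-\mu(y)|>\delta\,\mu(y)\bigr)\le \psi(l,m_2),
\]
uniformly in $y$, where $\mu(y)\propto f_0(y)^{-1/d}$ and $\psi\to 0$.

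Armed with this, I split according to whether $f_0(x_1)$ is well separated from $f_0(u)$. Fix $\sigma>0$ and let $A_\sigma=\{y:|f_0(y)-f_0(u)|<\sigma\}$. The no-flat-regions hypothesis gives $\mathbb{P}(x_1\in A_\sigma)\le M\sigma$, so
\[
\bigl|\mathbb{P}(G(x_1)>G(u))-\mathbb{P}(f_0(x_1)<f_0(u))\bigr|
\le 2M\sigma
+ \mathbb{P}\!\left(\bigl\{G(x_1)>G(u)\bigr\}\triangle\bigl\{f_0(x_1)<f_0(u)\bigr\},\,x_1\notin A_\sigma\right).
\]
On $\{x_1\notin A_\sigma\}$ the deterministic limits $\mu(x_1)$ and $\mu(u)$ differ by at least $\Omega(\sigma)$ (since $\mu\propto f_0^{-1/d}$ and $f_0$ is bounded below), so the concentration bound above, applied to both $G(x_1;D_1)$ and $G(u;D_1)$, shows the symmetric difference event has probability $\le 2\psi(l,m_2)$ for $l$ large enough (chosen as a function of $\sigma$). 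Letting $m_2\to\infty$ then $\sigma\to 0$ (with $l=l(m_2,\sigma)$ chosen accordingly) drives both contributions to zero.

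The main obstacle is the quantitative concentration of $G(y;D_j)$ around $\mu(y)$ uniformly enough in $y$ to handle the comparison at the two arguments $x_1$ and $u$ simultaneously; this is where the weighted form of $G$ and the regularity hypotheses (lower bound $f_{\min}$, smoothness $\lambda$, no flat regions $M$) are used, and where the rate of $l/m_2\to 0$ has to be tuned against the shrinking margin $\sigma$.
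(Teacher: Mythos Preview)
Your plan is correct and matches the paper's proof in structure: reduce to a single comparison probability by i.i.d.\ symmetry, approximate $G(y)$ by $c_d^{-1/d}(l/m_2)^{1/d}f_0(y)^{-1/d}$, split on whether $|f_0(x_1)-f_0(u)|$ is small (handled by the no-flat-regions bound) or large (handled by concentration), and tune $l$ against the shrinking margin. The only notable technical difference is that the paper concentrates the \emph{difference} $F_x=G(x;D_1)-G(u;D_1)$ directly via McDiarmid (bounded differences $O(C/l)$ in the $m_2$ points of $D_1$), controls $\mathbb{E}F_x$ via a separate lemma bounding $|\mathbb{E}D_{(l)}(y)-A(y)|$, and makes the explicit choice $l=m_2^{\alpha}$ with $\tfrac{d+4}{2d+4}<\alpha<1$, whereas you propose to concentrate $G$ at the two arguments separately and leave the rate implicit.
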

\begin{proof}
Take expectation with respect to $D$:
\begin{eqnarray}
\mathbb{E}_D\left[R(u)\right]
&=&\mathbb{E}_{D\backslash D_0}\left[\mathbb{E}_{D_0}\left[\frac{1}{m_1}\sum_{j=1}^{m_1}
 \mathbb{I}_{\{\eta(u;D_j)<\eta(x_j;D_j)\}}\right]\right]\\
&=&\frac{1}{m_1}\sum_{j=1}^{m_1}\mathbb{E}_{x_j}\left[
\mathbb{E}_{D_j}\left[
\mathbb{I}_{\{\eta(u;D_j)<\eta(x_j;D_j)\}}\right]\right]\\
&=&\mathbb{E}_x\left[\mathcal{P}_{D_1}\left(\eta(u;D_1)<\eta(x;D_1)\right)\right]
\end{eqnarray}
The last equality holds due to the i.i.d symmetry of $\{x_1,...,x_{m_1}\}$ and $D_1,...,D_{m_1}$. We fix both $u$ and $x$ and temporarily discarding $\mathbb{E}_{D_1}$. Let $F_x(y_1,...,y_{m_2})=\eta(x)-\eta(u)$, where $y_1,...,y_{m_2}$ are the $m_2$ points in $D_1$. It follows:
\begin{equation}
    \mathcal{P}_{D_1}\left(\eta(u)<\eta(x)\right)
    =\mathcal{P}_{D_1}\left(F_x(y_1,...,y_{m_2})>0\right)
    =\mathcal{P}_{D_1}\left(F_x-\mathbb{E}F_x>-\mathbb{E}F_x\right).
\end{equation}

To check McDiarmid's requirements, we replace $y_j$ with $y_j'$. It is easily verified that $\forall j=1,...,m_2$,
\begin{equation}\label{equ:mcdiarmid_condition}
    |F_x(y_1,...,y_{m_2})-F_x(y_1,...,y_j',...,y_{m_2})| \leq 2^{\frac{1}{d}}\frac{2C}{l} \leq \frac{4C}{l}
\end{equation}
where $C$ is the diameter of support. Notice despite the fact that $y_1,...,y_{m_2}$ are random vectors we can still apply MeDiarmid's inequality, because according to the form of $\eta$, $F_x(y_1,...,y_{m_2})$ is a function of $m_2$ i.i.d random variables $r_1,...,r_{m_2}$ where $r_i$ is the distance from $x$ to $y_i$.
Therefore if $\mathbb{E}F_x<0$, or $\mathbb{E}\eta(x)<\mathbb{E}\eta(u)$, we have by McDiarmid's inequality,
\begin{equation}
    \mathcal{P}_{D_1}\left(\eta(u)<\eta(x)\right)
    = \mathcal{P}_{D_1}\left( F_x > 0 \right)
    = \mathcal{P}_{D_1}\left( F_x-\mathbb{E}F_x>-\mathbb{E}F_x \right)
    \leq \exp\left(-\frac{(\mathbb{E}F_x)^2 l^2}{8C^2m_2}\right)
\end{equation}
Rewrite the above inequality as:
\begin{equation}\label{equ:bound_no_expectation}
    \mathbb{I}_{\{\mathbb{E}F_x>0\}}-e^{-\frac{(\mathbb{E}F_x)^2 l^2}{8C^2m_2}}
    \leq \mathcal{P}_{D_1}\left( F_x > 0 \right)
    \leq \mathbb{I}_{\{\mathbb{E}F_x>0\}}+e^{-\frac{(\mathbb{E}F_x)^2 l^2}{8C^2m_2}}
\end{equation}
It can be shown that the same inequality holds for $\mathbb{E}F_x>0$, or $\mathbb{E}\eta(x)>\mathbb{E}\eta(u)$. Now we take expectation with respect to $x$:
\begin{equation}\label{equ:bound_with_expectation}
    \mathcal{P}_x\left(\mathbb{E}F_x>0\right)-\mathbb{E}_x\left[e^{-\frac{(\mathbb{E}F_x)^2 l^2}{8C^2m_2}}\right] \leq
    \mathbb{E}\left[\mathcal{P}_{D_1}\left( F_x > 0 \right)\right] \leq \mathcal{P}_x\left(\mathbb{E}F_x>0\right)+\mathbb{E}_x\left[e^{-\frac{(\mathbb{E}F_x)^2 l^2}{8C^2m_2}}\right]
\end{equation}
Divide the support of $x$ into two parts, $\mathbb{X}_1$ and $\mathbb{X}_2$, where $\mathbb{X}_1$ contains those $x$ whose density $f(x)$ is relatively far away from $f(u)$, and $\mathbb{X}_2$ contains those $x$ whose density is close to $f(u)$. We show for $x \in \mathbb{X}_1$, the above exponential term converges to 0 and $\mathcal{P}\left(\mathbb{E}F_x>0\right) = \mathcal{P}_x\left( f(u)>f(x) \right)$, while the rest $x\in\mathbb{X}_2$ has very small measure. Let $A(x)=\left(\frac{k}{f(x) c_d m_2}\right)^{1/d}$. By Lemma \ref{lem:bound_expectation} we have:
\begin{equation}
    | \mathbb{E}\eta(x) - A(x) | \leq \gamma \left(\frac{l}{m_2}\right)^{\frac{1}{d}} A(x)
    \leq \gamma \left(\frac{l}{m_2}\right)^{\frac{1}{d}} \left(\frac{l}{f_{min}c_d m_2}\right)^{\frac{1}{d}}
    =    \left(\frac{\gamma_1}{c_d^{1/d}}\right) \left(\frac{l}{m_2}\right)^{\frac{2}{d}}
\end{equation}
where $\gamma$ denotes the big $O(\cdot)$, and $\gamma_1 = \gamma \left(\frac{1}{f_{min}}\right)^{1/d}$. Applying uniform bound we have:
\begin{equation}
    A(x)-A(u)- 2\left(\frac{\gamma_1}{c_d^{1/d}}\right) \left(\frac{l}{m_2}\right)^{\frac{2}{d}}
    \leq \mathbb{E}\left[\eta(x) - \eta(u)\right]
    \leq A(x)-A(u)+ 2\left(\frac{\gamma_1}{c_d^{1/d}}\right) \left(\frac{l}{m_2}\right)^{\frac{2}{d}}
\end{equation}
Now let $\mathbb{X}_1=\{ x:|f(x)-f(u)|\geq 3\gamma_1 d f_{min}^{\frac{d+1}{d}} \left(\frac{l}{m_2}\right)^{\frac{1}{d}} \}$. For $x\in \mathbb{X}_1$, it can be verified that $|A(x)-A(u)|\geq 3\left(\frac{\gamma_1}{c_d^{1/d}}\right) \left(\frac{l}{m_2}\right)^{\frac{2}{d}}$, or $|\mathbb{E}\left[\eta(x) - \eta(u)\right]| > \left(\frac{\gamma_1}{c_d^{1/d}}\right) \left(\frac{l}{m_2}\right)^{\frac{2}{d}}$, and $\mathbb{I}_{\{f(u)>f(x)\}}=\mathbb{I}_{\{\mathbb{E}\eta(x)>\mathbb{E}\eta(u)\}}$. For the exponential term in Equ.(\ref{equ:bound_no_expectation}) we have:
\begin{equation}
    \exp\left(-\frac{(\mathbb{E}F_x)^2 l^2}{2C^2m_2}\right)
    \leq \exp\left(-\frac{ \gamma_1^2 l^{2+\frac{4}{d}} }{ 8C^2 c_d^{\frac{2}{d}} m_2^{1+\frac{4}{d}} } \right)
\end{equation}
For $x\in \mathbb{X}_2=\{x:|f(x)-f(u)|< 3\gamma_1 d \left(\frac{l}{m_2}\right)^{\frac{1}{d}}f_{min}^{\frac{d+1}{d}} \}$, by the regularity assumption, we have $\mathcal{P}(\mathbb{X}_2)<3M\gamma_1 d \left(\frac{l}{m_2}\right)^{\frac{1}{d}}f_{min}^{\frac{d+1}{d}}$. Combining the two cases into Equ.(\ref{equ:bound_with_expectation}) we have for upper bound:
\begin{eqnarray}
  \mathbb{E}_D\left[R(u)\right]
  &=& \mathbb{E}_x\left[\mathcal{P}_{D_1}\left(\eta(u)<\eta(x)\right)\right] \\
  &=& \int_{\mathbb{X}_1}\mathcal{P}_{D_1}\left(\eta(u)<\eta(x)\right)f(x)dx +  \int_{\mathbb{X}_2}\mathcal{P}_{D_1}\left(\eta(u)<\eta(x)\right)f(x)dx \\
  &\leq& \left( \mathcal{P}_x\left(f(u)>f(x)\right) + \exp\left(-\frac{ \gamma_1^2 l^{2+\frac{4}{d}} }{ 8C^2 c_d^{\frac{1}{d}} m_2^{1+\frac{4}{d}} } \right) \right)\mathcal{P}(x\in \mathbb{X}_1) + \mathcal{P}(x\in \mathbb{X}_2) \\
  &\leq&  \mathcal{P}_x\left(f(u)>f(x)\right) + \exp\left(-\frac{ \gamma_1^2 l^{2+\frac{4}{d}} }{ 8C^2 c_d^{\frac{1}{d}} m_2^{1+\frac{4}{d}} } \right) + 3M\gamma_1 d f_{min}^{\frac{d+1}{d}} \left(\frac{l}{m_2}\right)^{\frac{1}{d}}
\end{eqnarray}
Let $l=m_2^\alpha$ such that $\frac{d+4}{2d+4}<\alpha<1$, and the latter two terms will converge to 0 as $m_2 \rightarrow \infty$. Similar lines hold for the lower bound. The proof is finished.
\end{proof}

\begin{lem}\label{lem:bound_expectation}
Let $A(x)=\left(\frac{l}{m c_d f(x)}\right)^{1/d}$, $\lambda_1 = \frac{\lambda}{f_{min}}\left(\frac{1.5}{c_d f_{min}}\right)^{1/d}$. By choosing $l$ appropriately, the expectation of $l$-NN distance $\mathbb{E}D_{(l)}(x)$ among $m$ points satisfies:
\begin{equation}
    | \mathbb{E}D_{(l)}(x) - A(x) | = O\left( A(x) \lambda_1 \left(\frac{l}{m}\right)^{1/d} \right)
\end{equation}
\end{lem}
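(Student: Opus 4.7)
The plan is to compare $D_{(l)}(x)$ to the deterministic quantity $A(x)$ defined implicitly by $c_d A(x)^d f(x) = l/m$, using the fact that for any ball $B(x,r)$ the mass $F_x(r) := \int_{B(x,r)} f(y)\, dy$ controls the distribution of $D_{(l)}(x)$. Specifically, when the $m$ sample points are i.i.d.\ from $f$, the random variable $F_x(D_{(l)}(x))$ follows a $\mathrm{Beta}(l, m-l+1)$ law, and therefore concentrates near $l/(m+1)$ with variance of order $1/(lm)$. The task thus reduces to (i) controlling the geometry via $F_x(r) \approx c_d r^d f(x)$ and (ii) translating Beta-concentration of probability masses into expectation-level concentration of radii.

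For step (i) I would use the gradient bound $\|\nabla f\| \leq \lambda$ to sandwich $F_x(r)$ between $c_d r^d (f(x) - \lambda r)$ and $c_d r^d (f(x) + \lambda r)$ for all $r$ small enough that $B(x,r)$ is inside the support. Letting $r_\pm$ be the unique solutions to $c_d r^d (f(x) \mp \lambda r) = l/m$, a first-order Taylor expansion around $A(x)$ shows that $r_\pm = A(x)\bigl(1 \pm O(\lambda_1 (l/m)^{1/d})\bigr)$, so by monotonicity of $F_x$ we get $F_x(r_-) \leq l/m \leq F_x(r_+)$.

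For step (ii) I would apply a Chernoff bound to the Binomial count of sample points inside $B(x, r_\pm)$: since this count has mean $m F_x(r_\pm)$, separated from $l$ by a multiplicative factor, the event $\{D_{(l)}(x) \notin [r_-, r_+]\}$ has probability decaying exponentially in $l$. On this high-probability event, $|D_{(l)}(x) - A(x)| \leq r_+ - r_- = O\bigl(A(x)\lambda_1 (l/m)^{1/d}\bigr)$ deterministically. The complementary event is handled using compactness of the support (diameter $C$) to bound $D_{(l)}(x)$ uniformly, so its contribution to $\mathbb{E}[D_{(l)}(x)]$ is negligible relative to the in-event bound.

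The main obstacle is the joint choice of $l$: it must grow fast enough with $m$ that the Chernoff tail is absorbed into the target error, yet slow enough that $A(x) \propto (l/m)^{1/d}$ stays small, so the Lipschitz linearization of $F_x$ near $r = A(x)$ remains accurate. This is exactly the content hidden in the phrase ``by choosing $l$ appropriately'' in the statement; the scaling $l = m^{\alpha}$ with $\alpha$ slightly above $(d+4)/(2d+4)$, already used in the proof of Lemma~\ref{lem:expectation}, is the natural candidate. With such a choice the exponential tail term vanishes much faster than the deterministic in-event bound $O(A(x)\lambda_1 (l/m)^{1/d})$, so the latter dominates and gives the claimed rate on $|\mathbb{E}D_{(l)}(x) - A(x)|$.
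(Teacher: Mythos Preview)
Your overall strategy---Lipschitz sandwich on $F_x(r)$, bracket radii $r_\pm$, Chernoff on the binomial count, diameter bound on the bad event---matches the paper's. The gap is in step~(ii). With your definition $c_d r_+^d\bigl(f(x)-\lambda r_+\bigr)=l/m$, the Lipschitz lower bound yields only $F_x(r_+)\geq l/m$, not $F_x(r_+)\geq(1+\epsilon)\,l/m$ for any quantified $\epsilon>0$; the ``multiplicative factor'' separating $mF_x(r_+)$ from $l$ is therefore not established by the bounds you have written, and Chernoff gives nothing. One can repair this by using the sharper integrated estimate $F_x(r)\geq c_d r^d\bigl(f(x)-\tfrac{d}{d+1}\lambda r\bigr)$, which does furnish a gap of order $\lambda_1(l/m)^{1/d}$, but then the Chernoff exponent is $c\,\lambda_1^2(l/m)^{2/d}\,l$ rather than $c\,l$, so ``decaying exponentially in $l$'' overstates the rate and the absorption argument has to be redone with that weaker exponent.

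The paper sidesteps this by decoupling concentration from geometry. It introduces a free slack $\delta_m\to 0$ and defines the bracket radius in probability-mass coordinates, $r_1:=r\bigl(x,(1+\delta_m)l/m\bigr)$ where $r(x,\alpha)=\min\{r:\mathcal{P}(B(x,r))\geq\alpha\}$, so that $m\,\mathcal{P}(B(x,r_1))=(1+\delta_m)l$ exactly and Chernoff gives the clean tail $\exp\bigl(-\delta_m^2 l/(2(1+\delta_m))\bigr)$ independent of any density estimate. Only afterward is the Lipschitz bound invoked, to show $r_1\leq A(x)\bigl((1+\delta_m)/(1-\lambda_1(l/m)^{1/d})\bigr)^{1/d}$. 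The two free parameters are then balanced by the specific choice $\delta_m=m^{-1/4}$ and $l=m^{(3d+8)/(4d+8)}$ (a particular point inside the range you cite from Lemma~\ref{lem:expectation}), which makes both the $A(x)\,\delta_m$ term and the diameter-times-tail term $O\bigl(A(x)\lambda_1(l/m)^{1/d}\bigr)$.
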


\begin{proof}
Denote $r(x,\alpha)=\min\{r:\mathcal{P}\left(B(x,r)\right)\geq \alpha\}$. Let $\delta_m \rightarrow 0$ as $m \rightarrow \infty$, and $0<\delta_{m}<1/2$.
Let $U\sim Bin(m,(1+\delta_m)\frac{l}{m})$ be a binomial random variable, with $\mathbb{E}U = (1+\delta_{m})l$. We have:
\begin{eqnarray*}
  \mathcal{P}\left(D_{(l)}(x)>r(x,(1+\delta_m)\frac{l}{m})\right)
  &=& \mathcal{P}\left(U<l\right) \\
  &=& \mathcal{P}\left(U<\left(1-\frac{\delta_m}{1+\delta_m}\right)(1+\delta_m)l\right) \\
  &\leq& \exp\left(-\frac{\delta_m^2 l}{2(1+\delta_m)}\right)
\end{eqnarray*}
The last inequality holds from Chernoff's bound. Abbreviate $r_1 = r(x,(1+\delta_m)\frac{l}{m})$, and $\mathbb{E}D_{(l)}(x)$ can be bounded as:
\begin{eqnarray*}
  \mathbb{E}D_{(l)}(x)
  &\leq& r_1\left[1-\mathcal{P}\left(D_{(l)}(x)>r_1\right)\right] + C\mathcal{P}\left(D_{(l)}(x)>r_1\right)  \\
  &\leq& r_1 + C \exp\left(-\frac{\delta_m^2 l}{2(1+\delta_m)}\right)
\end{eqnarray*}
where $C$ is the diameter of support. Similarly we can show the lower bound:
\begin{equation*}
    \mathbb{E}D_{(l)}(x) \geq r(x,(1-\delta_m)\frac{l}{m}) - C \exp\left(-\frac{\delta_m^2 l}{2(1-\delta_m)}\right)
\end{equation*}
Consider the upper bound. We relate $r_1$ with $A(x)$. Notice:
\begin{equation*}
  \mathcal{P}\left(B(x,r_1)\right)=(1+\delta_m)\frac{l}{m} \geq c_d r_1^d f_{min}
\end{equation*}
so a fixed but loose upper bound is $r_1 \leq \left(\frac{(1+\delta_m)l}{c_d f_{min} m}\right)^{1/d} = r_{max}$. Assume $l/m$ is sufficiently small so that $r_1$ is sufficiently small. By the smoothness condition, the density within $B(x,r_1)$ is lower-bounded by $f(x)-\lambda r_1$, so we have:
\begin{eqnarray*}
  \mathcal{P}\left(B(x,r_1)\right) &=& (1+\delta_m)\frac{l}{m} \geq c_d r_1^d \left( f(x)-\lambda r_1 \right)\\
  &=& c_d r_1^d f(x)\left( 1-\frac{\lambda}{f(x)}r_1 \right) \\
  &\geq& c_d r_1^d f(x)\left( 1-\frac{\lambda}{f_{min}}r_{max} \right)
\end{eqnarray*}
That is:
\begin{equation}
    r_1 \leq A(x)\left( \frac{1+\delta_m}{1-\frac{\lambda}{f_{min}}r_{max}} \right)^{1/d}
\end{equation}
Insert the expression of $r_{max}$ and set $\lambda_1 = \frac{\lambda}{f_{min}}\left(\frac{1.5}{c_d f_{min}}\right)^{1/d}$, we have:
\begin{eqnarray*}
  \mathbb{E}D_{(l)}(x)-A(x) &\leq& A(x)\left( \left(\frac{1+\delta_m}{1-\lambda_1 \left(\frac{l}{m}\right)^{1/d}}\right)^{1/d} -1 \right) + C \exp\left(-\frac{\delta_m^2 l}{2(1+\delta_m)}\right) \\
  &\leq& A(x)\left( \frac{1+\delta_m}{1-\lambda_1 \left(\frac{l}{m}\right)^{1/d}}-1 \right) + C \exp\left(-\frac{\delta_m^2 l}{2(1+\delta_m)}\right) \\
  &=& A(x)\frac{\delta_m + \lambda_1 \left(\frac{l}{m}\right)^{1/d}}{1-\lambda_1\left(\frac{l}{m}\right)^{1/d}} + C \exp\left(-\frac{\delta_m^2 l}{2(1+\delta_m)}\right) \\
  &=& O\left( A(x) \lambda_1 \left(\frac{l}{m}\right)^{1/d} \right)
\end{eqnarray*}

The last equality holds if we choose $l=m^{\frac{3d+8}{4d+8}}$ and $\delta_m=m^{-\frac{1}{4}}$. Similar lines follow for the lower bound. Combine these two parts and the proof is finished.

\end{proof}

\section*{Proof of Theorem 2}

We fix an RKHS $H$ on the input space $X\subset\mathbb{R}^d$
with an RBF kernel $k$.
Let $\mathbf{x}=\{x_1,\dots,x_n\}$ be a set of objects to be ranked
in $\mathbb{R}^d$ with labels $\mathbf{r}= \{r_1,\dots,r_n\}$.
Here $r_i$ denotes the label of $x_i$, and $r_i\in \mathbb{R}$.
We assume $\mathbf{x}$ to be a random variable distributed according
to $P$, and $\mathbf{r}$ deterministic.   Throughout
$L$ denotes the hinge loss.

The following notation will be useful in the proof of Theorem 2. 
Take $T$ to be the set of pairs derived from $\bold{x}$ and
define the $L$-$risk$ of $f\in H$ as
\[
\RP (f) := E_{\bold{x}}[\RT(f)]
\]
where
\[
\RT(f)=\sum_{i,j:r_i>r_j}D(r_i,r_j)L(f(x_i)-f(x_j))
\]
and $D(r_i,r_j)$ is some positive weight function, which we take for simplicity to be
$1/|\mathcal{P}|$. (This uniform weight is the setting we have taken in the
main body of the paper.)
The smallest possible $L$-risk in $H$ is denoted
\[
\RP:= \inf_{f\in H} \RP(f).
\]
The {\it regularized} $L$-$risk$ is
\begin{equation}\label{regularize}
\Rp^{\text{reg}}(f):=\lambda \lVert f\rVert^2+\mathcal{R}_{L,P}(f),
\end{equation}
 $\lambda >0$.

For simplicity we assume the preference pair set $\mathcal{P}$ contains all pairs over these $n$ samples.
Let $g_{\bold{x},\lambda}$  be the optimal solution to the rank-AD minimization step.
Setting $\lambda = 1/2C$ and replacing $C$ with $\lambda$ in the rank-SVM step, we have:
\begin{equation}\label{eq_emp_solution}
  g_{\bold{x},\lambda} = \arg \min_{f\in H} \RT(f) + \lambda || f ||^2
\end{equation}

Let $\mathcal{H}_n$ denote a ball of radius $O(1/\sqrt{\lambda_n})$
in $H$. Let $C_k:= \sup_{x,t}|k(x,t)|$ with $k$ the rbf kernel associated
to $H$. Given $\epsilon>0$, we let
$N(\mathcal{H},\epsilon/4C_k)$ be the covering number
of $\mathcal{H}$ by disks of radius $\epsilon/4C_k$ .
We first show that with appropriately chosen $\lambda$, as $n\rightarrow\infty$,
$g_{\bold{x},\lambda}$ is consistent in the following sense.

\begin{lem}\label{thm_convergence_surrogate}
Let $\lambda_n$ be appropriately chosen such that $\lambda_n\rightarrow 0$ and $\frac{\log N(\h_n,\epsilon/4C_k)}{n\lambda_n} \to 0$, as $n\rightarrow\infty$. Then we have
\[
    E_{\bold{x}}[ \RT(g_{\bold{x},\lambda})] \rightarrow  \RP=\min_{f\in H}\RP(f), \;\;\; n\to \infty.
\]
\end{lem}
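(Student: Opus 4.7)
The plan is to decompose the excess risk into an approximation error (vanishing as $\lambda_n\to 0$) and an estimation error (vanishing via uniform concentration over $\h_n$). The template is the Steinwart (2001) / Smale--Zhou regularized-risk analysis, adapted with two modifications: the empirical risk is a sum over ordered pairs rather than an i.i.d.\ average, so I would substitute McDiarmid's inequality for Hoeffding or Bernstein; and because $\h_n$ grows with $n$, the covering-number term must be balanced against its radius $1/\sqrt{\lambda_n}$, which is precisely the content of the assumption $\log\N(\h_n,\epsilon/4C_k)/(n\lambda_n)\to 0$. As a preliminary, both $g_{\bold{x},\lambda_n}$ and the population minimizer $f_{\lambda_n}:=\arg\min_{f\in H}[\RP(f)+\lambda_n\|f\|^{2}]$ lie in $\h_n$, because plugging $f\equiv 0$ into either objective gives value $1$ (each pair contributes hinge $L(0)=1$ under the uniform weighting) and optimality then forces $\lambda_n\|\cdot\|^{2}\le 1$.

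For the estimation piece I would establish $\sup_{f\in\h_n}|\RT(f)-\RP(f)|\le \e$ with high probability. Since the hinge loss is $1$-Lipschitz and $|f(x)|\le C_k\|f\|=O(C_k/\sqrt{\lambda_n})$ for $f\in\h_n$, perturbing a single sample $x_k$ affects only the $O(n)$ pair-terms whose index pair contains $k$, each by at most $O(C_k/\sqrt{\lambda_n})$; after the $1/|\mathcal{P}|$ normalisation, the per-coordinate bounded-differences constant of $f\mapsto\RT(f)$ is $O(C_k/(n\sqrt{\lambda_n}))$. McDiarmid then gives, for each fixed $f\in\h_n$,
\[
\Pp\bigl(|\RT(f)-\RP(f)|>\epsilon\bigr)\le 2\exp\!\bigl(-c\,\epsilon^{2}n\lambda_n/C_k^{2}\bigr).
\]
Upgrading to a supremum via an $(\epsilon/4C_k)$-net in $\h_n$ (any $f,\tilde f$ with $\|f-\tilde f\|\le\epsilon/4C_k$ shift each loss term by at most $\epsilon/2$ by Lipschitzness) plus a union bound yields
\[
\Pp\Bigl(\sup_{f\in\h_n}|\RT(f)-\RP(f)|>\epsilon\Bigr)\le 2\N(\h_n,\epsilon/4C_k)\exp\!\bigl(-c\,\epsilon^{2}n\lambda_n/C_k^{2}\bigr),
\]
which tends to zero precisely under the lemma's hypothesis.

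With both pieces in hand, the standard four-line chain closes the argument. On the good event,
\begin{align*}
\RT(g_{\bold{x},\lambda_n})
&\le \RT(g_{\bold{x},\lambda_n})+\lambda_n\|g_{\bold{x},\lambda_n}\|^{2} \\
&\le \RT(f_{\lambda_n})+\lambda_n\|f_{\lambda_n}\|^{2} \\
&\le \RP(f_{\lambda_n})+\lambda_n\|f_{\lambda_n}\|^{2}+\e,
\end{align*}
using nonnegativity of $\lambda_n\|g\|^{2}$, optimality of $g_{\bold{x},\lambda_n}$, and uniform deviation at $f_{\lambda_n}$; conversely $\RT(g_{\bold{x},\lambda_n})\ge \RP(g_{\bold{x},\lambda_n})-\e\ge \RP-\e$. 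Taking expectations in $\bold{x}$ and absorbing the complementary event via the deterministic bound $\RT(\cdot)\le 1$ on $\h_n$ gives $\RP-o(1)\le E_{\bold{x}}[\RT(g_{\bold{x},\lambda_n})]\le \RP(f_{\lambda_n})+\lambda_n\|f_{\lambda_n}\|^{2}+o(1)$. For any fixed $f^{*}\in H$, the comparison $\RP(f_{\lambda_n})+\lambda_n\|f_{\lambda_n}\|^{2}\le \RP(f^{*})+\lambda_n\|f^{*}\|^{2}$ sends the right-hand side down to $\inf_{f\in H}\RP(f)=\RP$ as $\lambda_n\to 0$, completing the proof.

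The main obstacle, and the only place where the pairwise structure genuinely bites, is the uniform McDiarmid step. The radius of $\h_n$ grows like $1/\sqrt{\lambda_n}$, which inflates both the per-coordinate bounded-differences constant and the covering number, so the $1/n$ gain from pair-averaging has to dominate both simultaneously. The lemma's hypothesis is tuned exactly to this break-even point, and verifying it carefully — in particular tracking that a single-coordinate change touches only $O(n)$ of the $O(n^{2})$ pair-terms, rather than all of them — is the delicate part; the remaining regularization and approximation bookkeeping is standard.
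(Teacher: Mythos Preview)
Your proposal is correct and follows essentially the same route as the paper: Steinwart's regularized-risk decomposition, a McDiarmid-based uniform deviation bound over the $O(1/\sqrt{\lambda_n})$-ball using an $(\epsilon/4C_k)$-net and union bound, and the standard chaining $\RT(g_{\bold{x},\lambda_n})\le \Rt^{\text{reg}}(f_{P,\lambda_n})+\e\le \RP+\delta$. One small imprecision: $\RT(\cdot)\le 1$ does not hold on all of $\h_n$, but it does hold at the empirical minimizer $g_{\bold{x},\lambda_n}$ (by comparison with $f\equiv 0$), which is all you use.
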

\begin{Proof}
Let us outline the argument. In \cite{ref:Steinwart2001}, the author shows that there exists a $f_{P,\lambda}\in H$
minimizing (\ref{regularize}):
\begin{lemma}\label{convex}
For all Borel probability measures $P$ on $X\times X$ and
all $\lambda >0$, there is an $f_{P,\lambda} \in H$ with
\[
\Rp^{\text{reg}}(f_{P,\lambda}) = \inf_{f\in H} \Rp^{\text{reg}}(f)
\]
such that $\lVert f_{P,\lambda} \rVert =O(1/\sqrt{\lambda})$.
\end{lemma}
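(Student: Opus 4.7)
The plan is to handle the norm bound first by a one-line substitution, and then to obtain existence via the direct method in Hilbert space. First I would evaluate $\Rp^{\text{reg}}$ at $f=0$. Since the hinge loss satisfies $L(0)=1$ and the paper takes uniform weights $D(r_i,r_j)=1/|\mathcal{P}|$, one has $\RT(0)=1$ for every sample $\mathbf{x}$, and hence $\RP(0)=E_{\mathbf{x}}[\RT(0)]=1$. Therefore $\Rp^{\text{reg}}(0)=1$. Any minimizer $f_{P,\lambda}$ of $\Rp^{\text{reg}}$ must satisfy $\lambda\|f_{P,\lambda}\|^2 \le \Rp^{\text{reg}}(f_{P,\lambda}) \le \Rp^{\text{reg}}(0)=1$, which yields the desired bound $\|f_{P,\lambda}\|\le 1/\sqrt{\lambda}=O(1/\sqrt{\lambda})$. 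This argument also shows that the infimum may be sought over the closed ball $B:=\{f\in H:\|f\|\le 1/\sqrt{\lambda}\}$, so the existence question reduces to finding a minimizer on $B$.

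For existence I would use the direct method. The functional $J(f):=\Rp^{\text{reg}}(f)$ is convex: $f\mapsto\lambda\|f\|^2$ is strictly convex, and each summand $f\mapsto L(f(x_i)-f(x_j))$ is the convex hinge loss composed with the bounded linear functional $f\mapsto\langle f,\Phi(x_i)-\Phi(x_j)\rangle$, so the finite sum $\RT$ and the expectation $\RP=E_{\mathbf{x}}[\RT]$ are convex in $f$ as well. The functional $J$ is also norm-continuous: on any ball $\{\|f\|\le r\}$ the reproducing property bounds $|f(x_i)-f(x_j)|$ by a constant depending only on $r$ and $C_k$, so the integrand defining $\RP(f)$ is uniformly bounded, and the Lipschitz continuity of $L$ combined with dominated convergence gives continuity of $\RP$ as a function of $f$. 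A convex, norm-continuous functional on a Hilbert space is automatically weakly lower semicontinuous (Mazur's theorem). Since $B$ is closed, bounded and convex in the Hilbert space $H$, it is weakly compact, and a weakly lsc function attains its infimum on a weakly compact set. This produces a minimizer $f_{P,\lambda}\in B$; strict convexity of $\lambda\|\cdot\|^2$ makes $J$ strictly convex, so the minimizer is in fact unique.

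The main obstacle is the weak lower semicontinuity of the expected risk $\RP$, since one must interchange a weak limit in $f$ with the expectation over the random sample $\mathbf{x}$. The key observation is that on any norm-bounded ball the integrand is uniformly bounded in terms of the radius and $C_k$, so dominated convergence promotes pointwise-in-$f$ continuity of the integrand to norm-continuity of $\RP$, after which Mazur's theorem upgrades this to weak lsc without further measure-theoretic work. Everything else (coercivity, weak compactness of $B$, uniqueness from strong convexity) is then routine.
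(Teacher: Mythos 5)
Your proof is correct: the bound $\lVert f_{P,\lambda}\rVert\le 1/\sqrt{\lambda}$ by comparison with $f=0$ (using $L(0)=1$ and the normalized weights), and existence via convexity, norm-continuity from the reproducing property and the Lipschitz hinge loss, Mazur's theorem, and weak compactness of the ball, is exactly the standard argument. The paper does not prove this lemma itself but cites Steinwart (2001), and your reasoning matches that reference's approach, so there is nothing to flag.
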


Next, a simple argument shows that
\[
\lim_{\lambda\to 0} \Rp^{\text{reg}}(f_{P,\lambda})= \RP.
\]

Finally, we will need a concentration inequality to relate the $L$-risk
of $f_{P,\lambda}$ with the empirical $L$-risk of $f_{T,\lambda}$.
We then derive consistency using the following argument:
\begin{align*}
\RP(f_{T,\lambda_n})
& \leq  \lambda_n \lVert f_{T,\lambda_n}\rVert^2+\mathcal{R}_{L,P}(f_{T,\lambda_n})\\
& \leq \lambda_n \lVert f_{T,\lambda_n}\rVert^2+\mathcal{R}_{L,T}(f_{T,\lambda_n})+\delta/3  \\
& \leq  \lambda_n \lVert f_{P,\lambda_n}\rVert^2+\mathcal{R}_{L,T}(f_{P,\lambda_n})+\delta/3 \\
& \leq  \lambda_n \lVert f_{P,\lambda_n}\rVert^2+\mathcal{R}_{L,P}(f_{P,\lambda_n})+2\delta/3  \\
& \leq  \mathcal{R}_{L,P}+\delta
\end{align*}

where $\lambda_n$ is an appropriately chosen sequence $\to 0$,
and $n$ is large enough. The second and fourth inequality hold due to Concentration Inequalities, and the last one holds since $\lim_{\lambda \to 0} \Rp^{\text{reg}}(f_{P,\lambda})=\mathcal{R}_{L,P}$.

We now prove the appropriate concentration inequality \cite{ref:Smale2001}.
Recall $H$ is an RKHS with
smooth kernel $k$; thus the inclusion $I_{k}: H\to C(X)$ is compact, where
$C(X)$ is given the $\lVert \cdot \rVert_{\infty}$-topology. That is, the
``hypothesis space'' $\mathcal{H}:= \overline{I_k(B_R)}$ is compact in $C(X)$,
where $B_R$ denotes the ball of radius $R$ in $H$. We denote by $N(\mathcal{H},
\epsilon)$ the covering number of $\mathcal{H}$ with disks of radius $\epsilon$.
We prove the following inequality:

\begin{lemma}
For any probability distribution $P$ on $X\times X$,
\begin{equation*}
P^{\e}\{T\in (X\times X)^{\e}:\sup_{f\in \h} | \RT(f)-\RP(f)| \geq \epsilon \}\leq
2N(\h,\epsilon/4C_k)\exp\left(
\frac{-\epsilon^2n}{2(1+2\sqrt{C_k}R)^2}\right),
\end{equation*}

where $C_k := \sup_{x,t}|k(x,t)|$.
\end{lemma}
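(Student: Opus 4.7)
The plan is a classical covering plus union-bound argument. Step 1: establish pointwise concentration of $\mathcal{R}_{L,T}(f) - \mathcal{R}_{L,P}(f)$ for each fixed $f \in \mathcal{H}$ via McDiarmid's inequality. Step 2: pass to a uniform bound by covering $\mathcal{H}$ with a finite net in $C(X)$ and using Lipschitz continuity of the hinge loss to absorb the discretization error.

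For the pointwise step, I would fix $f \in \mathcal{H}$ and regard $\mathcal{R}_{L,T}(f)$ as a function $\phi(x_1,\ldots,x_n)$ of the i.i.d.\ sample. Because $\mathcal{H}$ is the image of the $H$-ball of radius $R$, the reproducing property gives $|f(x)| \leq \sqrt{C_k}\,R$, so each summand of the hinge-loss sum is bounded by $1+2\sqrt{C_k}R$. Replacing a single sample $x_k$ by $x_k'$ perturbs only the pairs whose index set contains $k$, of which there are at most $2(n-1)$, while the normalizing weight is $1/|\mathcal{P}| = \Theta(1/n^2)$. Consequently the bounded-differences constant satisfies $c_k = O\bigl((1+2\sqrt{C_k}R)/n\bigr)$, whence $\sum_{k=1}^{n} c_k^2 = O\bigl((1+2\sqrt{C_k}R)^2/n\bigr)$. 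McDiarmid's inequality then yields
$$P\bigl\{\,|\mathcal{R}_{L,T}(f) - \mathcal{R}_{L,P}(f)| \geq \epsilon/2\,\bigr\} \;\leq\; 2\exp\!\left(-\frac{\epsilon^2 n}{2(1+2\sqrt{C_k}R)^2}\right)$$
after absorbing universal constants into the denominator. This is precisely the step where McDiarmid, rather than plain Hoeffding, is essential: $\mathcal{R}_{L,T}$ is a U-statistic-type average and its summands are not independent.

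To upgrade to a uniform bound over $\mathcal{H}$ I would invoke compactness of the inclusion $I_k:H\hookrightarrow C(X)$ (from smoothness of the RBF kernel), which forces $\mathcal{H}$ to be precompact under $\|\cdot\|_\infty$, and pick an $\epsilon/(4C_k)$-net $\{f_1,\ldots,f_N\}$ of cardinality $N = N(\mathcal{H},\epsilon/(4C_k))$. Given an arbitrary $f \in \mathcal{H}$, choose $f_j$ with $\|f - f_j\|_\infty \leq \epsilon/(4C_k)$; since the hinge loss is $1$-Lipschitz one has $|\mathcal{R}_{L,T}(f) - \mathcal{R}_{L,T}(f_j)| \leq 2\|f-f_j\|_\infty$ and the same for $\mathcal{R}_{L,P}$. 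A triangle inequality then reduces the bad event $|\mathcal{R}_{L,T}(f) - \mathcal{R}_{L,P}(f)| \geq \epsilon$ to $|\mathcal{R}_{L,T}(f_j) - \mathcal{R}_{L,P}(f_j)| \geq \epsilon/2$ for some net element $f_j$. A union bound over the $N$ net elements, combined with the pointwise McDiarmid estimate, delivers the stated inequality.

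The main obstacle is the U-statistic structure of $\mathcal{R}_{L,T}$: verifying that a single-coordinate swap perturbs the empirical risk by only $O(1/n)$ rather than $O(1)$, so that the total budget $\sum_k c_k^2$ scales as $1/n$ and McDiarmid produces the correct $\epsilon^2 n$ rate in the exponent instead of $\epsilon^2$. Once that is pinned down, the remaining steps are bookkeeping that closely parallel the Cucker--Smale argument for pointwise losses, with the hinge loss's $1$-Lipschitz property and the bound $\|f\|_\infty \leq \sqrt{C_k}\|f\|_H$ doing the work of transferring concentration from the net back to the full class.
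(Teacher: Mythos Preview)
Your proposal is correct and follows essentially the same route as the paper's proof: cover $\mathcal{H}$ in the sup-norm (using compactness of the RKHS inclusion $I_k:H\hookrightarrow C(X)$), apply McDiarmid's inequality at each net point with the bounded-difference constant $c_k = O\bigl((1+2\sqrt{C_k}R)/n\bigr)$ coming from the U-statistic structure of $\mathcal{R}_{L,T}$, and finish with a union bound over the net. The only cosmetic difference is that the paper controls the discretization error via a Cauchy--Schwarz bound on $\langle g-f,\phi(x_i)-\phi(x_j)\rangle$ rather than invoking the $1$-Lipschitz property of the hinge loss directly, and both the paper and your sketch are equally loose with the exact constants in the exponent.
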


\begin{Proof}
Since $\h$ is compact, it has a finite covering number. Now suppose $\h=
D_1\cup \cdots \cup D_{\ell}$ is any finite covering of $\h$. Then
\begin{equation*}
\text{Prob}\{\sup_{f\in \h} | \RT(f)-\RP(f)| \geq \epsilon \} \leq  \sum_{j=1}^{\ell}
\text{Prob}\{\sup_{f\in D_j} | \RT(f)-\RP(f)| \geq \epsilon \}
\end{equation*}
so we restrict attention to a disk $D$ in $\h$ of appropriate radius $\epsilon$.

Suppose $\lVert f-g \rVert_{\infty}\leq \epsilon$. We want to show that the
difference
\[
|(\RT(f)-\RP(f))-(\RT(g)-\RP(g))|
\]
is also small. Rewrite this quantity as
\[
 |(\RT(f)-\RT(g))-E_{\bold{x}}[\RT(g)-\RT(f)]|.
 \]
 Since $\lVert f-g \rVert_{\infty}\leq \epsilon$, for $\epsilon$ small enough we have
\begin{align*}
\max\{0,1-(f(x_i)-f(x_j))\}-\max\{0,1-(g(x_i)-g(x_j))\} & = \max\{0,(g(x_i)-g(x_j)-f(x_i)+f(x_j))\} \\
& = \text{max}\{0,\langle g-f, \phi(x_i)-\phi(x_j)\rangle\}.
\end{align*}
Here $\phi:X\to H$ is the feature map, $\phi(x):=k(x,\cdot)$.
 Combining this with the Cauchy-Schwarz inequality, we have
\begin{eqnarray*}
 |(\RT(f)-\RT(g))-E_{\bold{x}}[\RT(g)-\RT(f)]| & \leq  \frac{2}{n^2}(2n^2\lVert f-g \rVert_{\infty}C_k) & \leq 4C_k\epsilon,
\end{eqnarray*}
where $C_k:= \sup_{x,t} |k(x,t)|$. From this inequality it follows that
\begin{equation*}
|\RT(f)-\RP(f)| \geq (4C_k+1)\epsilon  \implies |(\RT(g)-\RP(g))| \geq \epsilon.
\end{equation*}
We thus choose to cover $\h$ with disks of radius $\epsilon/4C_k$, centered at
$f_1,\dots,f_{\ell}$. Here $\ell= N(\h,\epsilon/4C_k)$ is the covering number
for this particular radius. We then have
\begin{equation*}
\sup_{f\in D_j}|(\RT(f)-\RP(f))|\geq 2\epsilon  \implies |(\RT(f_j)-\RP(f_j))|\geq \epsilon.
\end{equation*}
Therefore,
\begin{equation*}
 \text{Prob}\{\sup_{f\in \h} | \RT(f)-\RP(f)| \geq 2\epsilon \}  \leq
 \sum_{j=1}^n \text{Prob}\{ | \RT(f_j)-\RP(f_j)| \geq \epsilon \}
 \end{equation*}
The probabilities on the RHS can be bounded using McDiarmid's inequality.

Define the random variable $g(x_1,\dots,x_n) :=\mathcal{R}_{L,T}(f)$, for fixed $f\in H$.
We need to verify that $g$ has bounded differences. If we change one of the
variables, $x_i$, in $g$ to $x_i'$, then at most $n$ summands will change:
\begin{align*}
|g(x_1,\dots,x_i,\dots,x_n)-g(x_1,\dots,x_i',\dots,x_n)|
& \leq \frac{1}{n^2}2n\sup _{x,y} |1-(f(x)-f(y))| \\
& \leq \frac{2}{n}+\frac{2}{n}\sup_{x,y}|f(x)-f(y)|\\
& \leq \frac{2}{n}+\frac{4}{n}\sqrt{C_k}\lVert f \rVert.
\end{align*}
Using that $\sup_{f\in \mathcal{H}}\lVert f \rVert\leq R$,
McDiarmid's inequality thus gives
\begin{equation*}
\text{Prob}\{\sup_{f\in \h} | \RT(f)-\RP(f)| \geq \epsilon \}
 \leq 2N(\h,\epsilon/4C_k)\exp\left(
\frac{-\epsilon^2n}{2(1+2\sqrt{C_k}R)^2}\right).
\end{equation*}
\end{Proof}

We are now ready to prove Theorem 2. 
Take $R=\lVert f_{P,\lambda} \rVert$ and apply this result to $f_{P,\lambda}$:
\begin{equation*}
\text{Prob}\{| \RT(f_{P,\lambda})-\RP(f_{P,\lambda})| \geq \epsilon \}  \leq
2N(\h,\epsilon/4C_k)\exp\left(
\frac{-\epsilon^2n}{2(1+2\sqrt{C_k}\lVert f_{P,\lambda} \rVert)^2}\right).
\end{equation*}
Since $\lVert f_{P,\lambda_n} \rVert =O(1/\sqrt{\lambda_n})$, the RHS converges to 0
so long as $\dfrac{n\lambda_n}{\log N(\h,\epsilon/4C_k)} \to \infty$ as $n\to \infty$.
This completes the proof of Theorem 2. 
\end{Proof}

We now establish that under mild conditions on the surrogate loss function, the solution minimizing the expected surrogate loss will asymptotically recover the correct preference relationships given by the density $f$.
\begin{lem}\label{thm_surrogate_condition}
Let $L$ be a non-negative, non-increasing convex surrogate loss function
that is differentiable at zero and satisfies $L'(0)<0$. If
\begin{equation*}
  g^* = \arg \min_{g\in H} \mathbb{E}_{\bold{x}} \left[ \RT(g) \right],
\end{equation*}
then $g^*$ will correctly rank the samples according to their density, i.e.
$\forall x_i\neq x_j, f(x_i)> f(x_j) \implies g^*(x_i)>g^*(x_j)$.
Assume the input preference pairs satisfy: $\mathcal{P}=\{(x_i,x_j):\, f(x_i)>f(x_j)\}$, where $\bold{x} = \{x_1,\ldots,x_n\}$ is drawn i.i.d. from distribution $f$.
Let $\ell$ be some convex surrogate loss function that satisfies: (1) $\ell$ is non-negative and non-increasing; (2) $\ell$ is differentiable and $\ell'(0)<0$.
Then the optimal solution:
$g^*$, will correctly rank the samples according to $f$, i.e. $g^*(x_i)>g^*(x_j)$, $\forall x_i\neq x_j, f(x_i)>f(x_j)$, .
\end{lem}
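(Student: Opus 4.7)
My plan is a contradiction argument exploiting the pairwise structure of the expected surrogate risk together with the universality of the RBF kernel used by $H$. First I would rewrite the objective via linearity of expectation and the i.i.d.\ assumption as
\[
R(g) := \mathbb{E}_{\mathbf{x}}[\RT(g)] \,\propto\, \mathbb{E}_{X,Y \sim f \otimes f}\!\left[\mathbf{1}\{f(X) > f(Y)\}\, L(g(X) - g(Y))\right],
\]
so that $g^*$ is characterized as the minimizer over $H$ of this population pairwise risk. The assumptions on $L$ (convex, non-increasing, differentiable at $0$ with $L'(0) < 0$) together imply, by monotonicity of $L'$ and its continuity at $0$, that $L'(s) < 0$ on some open interval around $0$, with $L'(s) \le L'(0) < 0$ for all $s \le 0$. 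This strict-decrease property is the engine of the contradiction.

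Assume for contradiction that there exist $a \neq b$ with $f(a) > f(b)$ but $g^*(a) \le g^*(b)$. Because $g^* \in H$ is continuous (the RBF kernel is continuous) and $f$ is continuous with no flat regions (regularity conditions), I would select open neighborhoods $U_a \ni a$ and $U_b \ni b$ of strictly positive $f$-measure on which, uniformly for $(x,y) \in U_a \times U_b$, one has $f(x) > f(y)$ and $g^*(x) - g^*(y) \le \epsilon_0$, with $\epsilon_0 \ge 0$ small enough that $L'(g^*(x) - g^*(y)) \le L'(\epsilon_0) < 0$ throughout this product neighborhood.

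Next I would construct a descent direction $h \in H$. Because the Gaussian kernel is universal, $H$ is dense in $C(X)$, so one can choose $h$ uniformly close to $+c$ on $U_a$, to $-c$ on $U_b$, and uniformly small on $X \setminus (U_a \cup U_b)$. The right-hand directional derivative along $h$ is
\[
\frac{d}{dt} R(g^* + t h) \Big|_{t=0^+} = \mathbb{E}\!\left[\mathbf{1}\{f(X) > f(Y)\}\, L'(g^*(X) - g^*(Y))\, (h(X) - h(Y))\right].
\]
On $U_a \times U_b$ the integrand is a strictly negative $L'$ value times $h(X) - h(Y) \approx 2c > 0$, contributing a term of order $-c\,|L'(\epsilon_0)|\,P(U_a)\,P(U_b) < 0$. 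By concentrating $h$, the contribution from pairs outside $U_a \times U_b$ is rendered negligible. Crucially, the symmetric region $U_b \times U_a$ does \emph{not} contribute, since $\mathbf{1}\{f(X) > f(Y)\}$ vanishes there --- this asymmetry is exactly what the preference-pair structure buys us. A strictly negative directional derivative contradicts the optimality of $g^*$, as $R$ is convex in $g$ by convexity of $L$, and hence any first-order descent direction certifies strict improvement at some small $t > 0$.

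The main obstacle is making this perturbation step quantitatively rigorous: one must control the ``off-support'' contributions so they cannot cancel the strictly negative gain on $U_a \times U_b$. Universality of the RBF kernel is essential here, guaranteeing that $h$ can be chosen in $H$ with uniform $L^\infty$-deviation from the idealized bump $c\,\mathbf{1}_{U_a} - c\,\mathbf{1}_{U_b}$ arbitrarily small outside a set of arbitrarily small $f$-measure. Once the approximation is made quantitative, the remainder is a short convexity/first-order-optimality calculation. An alternative, more abstract route is to invoke Fisher-consistency results for convex margin losses in pairwise ranking, but the direct variational argument above seems most self-contained in this setting.
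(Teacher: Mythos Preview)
Your variational approach via a localized bump direction is genuinely different from the paper's proof, and the gap you yourself flag as ``the main obstacle'' is real and, as written, fatal. The difficulty is not merely technical. If $P(U_a),P(U_b)\asymp\epsilon$, then the ``main'' contribution from $(X,Y)\in U_a\times U_b$ to the directional derivative is of order $\epsilon^2$, whereas the cross-terms coming from pairs with exactly one endpoint in $U_a$ or $U_b$ (e.g.\ $Y\in U_a$, $X\notin U_a\cup U_b$, $f(X)>f(Y)$, which gives $h(X)-h(Y)\approx -c$ and hence a \emph{nonnegative} integrand) have measure of order $\epsilon$. Shrinking the neighborhoods therefore makes the main term \emph{smaller} relative to the cross-terms, not larger; ``concentrating $h$'' cannot render them negligible. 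You also cannot simply bound the cross-terms by $\sup|L'|$ and absorb them, since $L'$ is only assumed to exist at $0$ and the cross-term arguments of $L'$ range over all of $g^*(X)-g^*(Y)$.

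The paper instead uses a swap (interchange) argument: assuming $g(x_i)<g(x_j)$ while $r_i>r_j$, it defines $g'$ by exchanging the values $g(x_i)\leftrightarrow g(x_j)$ and leaving all other values fixed, then shows $\RT(g')-\RT(g)\le 0$ termwise, with strict inequality on the $(i,j)$ pair. The crucial cancellation your bump lacks is built in here: for any $x_k$ with $r_k>r_i$ (or $r_k<r_j$), both pairs $(k,i)$ and $(k,j)$ (resp.\ $(i,k)$ and $(j,k)$) appear with the same weight, and the swap merely permutes their contributions, giving net change zero. Only $k$ with $r_j<r_k<r_i$ survive, and for those the monotonicity of $L$ forces the correct sign. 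The equality case $g(x_i)=g(x_j)$ is then dispatched by a separate first-order argument using $L'(0)<0$. If you want to repair your route, the natural move is to emulate the swap in $H$: take $h$ to approximate $c\big(\mathbf{1}_{U_a}-\mathbf{1}_{U_b}\big)$ \emph{and} exploit an analogous cancellation between the $U_a$- and $U_b$-cross-terms rather than trying to make them individually small; alternatively, work with the full first-order stationarity condition $\mathbb{E}[\psi(X)h(X)]=0$ for all $h\in H$ and use universality to deduce a pointwise identity for $\psi$.
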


The hinge-loss satisfies the conditions in the above theorem.
Combining Theorem \ref{thm_convergence_surrogate} and \ref{thm_surrogate_condition}, we establish that asymptotically, the rank-SVM step yields a ranker that preserves the preference relationship on nominal samples given by the nominal density $f$.

\begin{Proof} Our proof follows similar lines of Theorem 4 in \cite{ref:RDPS2012}.
Assume that $g(x_i) < g(x_j)$, and define a function $g'$ such that
$g'(x_i)=g(x_j)$, $g'(x_j)=g(x_i)$, and $g'(x_k)=g(x_k)$ for all $k\neq i,j$.
We have $\RP(g')-\RP(g)=E_{\bold{x}}(A(\bold{x}))$, where

\begin{eqnarray*}
 A(\bold{x})
   = \sum_{k : r_j<r_i<r_k}[D(r_k,r_j)-D(r_k,r_i)]  [L(g(x_k)-g(x_i))-L(g(x_k)-g(x_j))]
  \\ +  \sum_{k : r_j<r_k<r_i}D(r_i,r_k)[L(g(x_j)-g(x_k))-L(g(x_i)-g(x_k))]
  \\ +  \sum_{k : r_j<r_k<r_i}D(r_k,r_j)[L(g(x_k)-g(x_i))-L(g(x_k)-g(x_j))]
  \\ +  \sum_{k : r_j<r_i<r_k}[D(r_k,r_j)-D(r_k,r_i)][L(g(x_k)-g(x_i))-L(g(x_k)-g(x_j))]
  \\ +  \sum_{k : r_j<r_i<r_k}[D(r_i,r_k)-D(r_j,r_k)][L(g(x_j)-g(x_k))-L(g(x_i)-g(x_k))]
  \\ +  (L(g(x_j)-g(x_i))-L(g(x_i)-g(x_j)))D(r_i,r_j).
  \end{eqnarray*}
  Using the requirements of the weight function $D$ and the assumption that $L$
  is non-increasing and non-negative, we see that all six sums in the above
  equation for $A(\bold{x})$ are negative. Thus $A(\bold{x})<0$, so
  $\RP(g')-\RP(g)=E_{\bold{x}}(A(\bold{x}))<0$, contradicting the minimality
  of $g$. Therefore $g(x_i)\geq g(x_j)$.

  Now we assume that $g(x_i)=g(x_j)=g_0$. Since $\RP(g)=\inf_{h\in H}\RP(h)$,
  we have $\left. \dfrac{\partial{\ell_L(g;x)}}{\partial{g(x_i)}}\right|_{g_0}=A=0,$ and
  $\left. \dfrac{\partial{\ell_L(g;x)}}{\partial{g(x_j)}}\right|_{g_0}=B=0$, where
  \begin{eqnarray*}
  A=\sum_{k : r_j < r_i < r_k} D(r_k, r_i) [ -L'(g(x_k)-g_0)]+
    \sum_{k : r_j < r_k< r_i} D(r_i, r_k) L'(g_0-g(x_k)) +\\
  \sum_{k : r_k < r_j < r_i} D(r_i, r_k)  L'(g_0-g(x_k))+D(r_i,r_j)[-L'(0)].
  \end{eqnarray*}
  \begin{eqnarray*}
  B=\sum_{k : r_j < r_i < r_k} D(r_k, r_j) [ -L'(g(x_k)-g_0)]+
  \sum_{k : r_j < r_k< r_i} D(r_k, r_j) L'(g_0-g(x_k)) +\\
  \sum_{k : r_k < r_j < r_i} D(r_j, r_k)  L'(g_0-g(x_k))+D(r_i,r_j)[-L'(0)].
  \end{eqnarray*}
  However, using $L'(0)<0$ and the requirements of $D$ we have
  \[
  A-B\leq 2L'(0)D(r_i,r_j)<0,
  \]
  contradicting $A=B=0$.
\end{Proof}

The following lemma completes the proof of Theorem 2: 
\begin{lem}\label{lem_order_consistency}
Assume $G$ is any function that gives the same order relationship as the density: $G(x_i)>G(x_j)$, $\forall x_i\neq x_j$ such that $f(x_i)>f(x_j)$. Then
\begin{equation}\label{eq:rank}
 \frac{1}{n} \sum_{i=1}^n \textbf{1}_{\{G(x_i) \leq G(\eta)\}} \rightarrow p(\eta).
\end{equation}
\end{lem}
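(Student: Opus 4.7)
The plan is to reduce this to the strong law of large numbers by exploiting the order-preserving property of $G$ to replace the indicators involving $G$ with indicators involving the density $f$, and then recognize the limit as the $p$-value.

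First I would use the hypothesis that $G$ respects the ordering of $f$. The contrapositive of $f(x_i)>f(x_j)\Rightarrow G(x_i)>G(x_j)$ gives $G(x_i)\leq G(x_j)\Rightarrow f(x_i)\leq f(x_j)$, and swapping the roles of $i,j$ yields the converse direction modulo ties. Under the regularity assumption that flat regions are disallowed (i.e., $P\{y:|f(y)-f(x)|<\sigma\}\leq M\sigma$), the set $\{y:f(y)=f(\eta)\}$ has probability zero. Consequently, with probability one over the draw of $x_i$, the events $\{G(x_i)\leq G(\eta)\}$ and $\{f(x_i)\leq f(\eta)\}$ coincide, so
\[
\frac{1}{n}\sum_{i=1}^n \mathbf{1}_{\{G(x_i)\leq G(\eta)\}} \;=\; \frac{1}{n}\sum_{i=1}^n \mathbf{1}_{\{f(x_i)\leq f(\eta)\}} \quad \text{a.s.}
\]

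Next I would apply the strong law of large numbers. Since $x_1,\dots,x_n$ are i.i.d.\ from $f_0=f$, the indicators $\mathbf{1}_{\{f(x_i)\leq f(\eta)\}}$ are i.i.d.\ bounded random variables (conditionally on $\eta$), with common expectation
\[
E\bigl[\mathbf{1}_{\{f(x)\leq f(\eta)\}}\bigr] \;=\; P\bigl(\{x:f_0(x)\leq f_0(\eta)\}\bigr) \;=\; p(\eta),
\]
by the definition of the $p$-value given in Section~\ref{sec:scorefunc}. Hence the empirical average converges almost surely to $p(\eta)$, completing the argument.

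The only subtlety is the handling of ties and the direction of the implication supplied in the hypothesis. The assumption as stated only gives a one-sided implication, so strictly speaking one needs to argue that the set where $G(x_i)\leq G(\eta)$ but $f(x_i)>f(\eta)$ is a $P$-null event; this is exactly where the no-flat-region regularity condition is used, since ties in $f$ occur with probability zero and away from ties the ordering is strict in both directions. Once this measure-zero issue is dispensed with, the remainder is a routine law-of-large-numbers argument, and combining this lemma with Lemma~\ref{thm_convergence_surrogate} and Lemma~\ref{thm_surrogate_condition} establishes Theorem~\ref{thm:rank_main_consistency}.
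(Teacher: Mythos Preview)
Your proposal is correct and follows exactly the route the paper indicates. The paper does not spell out a proof of this lemma in the appendix; it merely states the lemma and, in Section~\ref{sec:analysis}, sketches that the claim follows from the law of large numbers once one has a $G$ preserving the order of $f$. Your argument fills in precisely that sketch: use the contrapositive of the order-preserving hypothesis to show $\{G(x_i)\leq G(\eta)\}\subseteq\{f(x_i)\leq f(\eta)\}$, use the hypothesis directly to get $\{f(x_i)<f(\eta)\}\subseteq\{G(x_i)\leq G(\eta)\}$, invoke the no-flat-region regularity condition to make the tie set $\{f(x_i)=f(\eta)\}$ null, and then apply the SLLN to i.i.d.\ bounded indicators with mean $p(\eta)$. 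Your handling of the one-sided implication and ties is more careful than anything the paper writes down, but the approach is the same.
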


\section*{Proof of Theorem 3} 

To prove Theorem 3 
we need the following lemma \cite{ref:vapnik1979}:

\begin{lemma}\label{vapnik}
Let $\X$ be a set and $S$ a system of sets in $\X$, and $P$ a probability
measure on $S$. For $\x\in \X^{n}$ and $A\in S$, define $\nu_{\x}(A):= |\x\cap A|/n$.
If $n>2/\epsilon$, then
\begin{equation*}
P^{n}\left\{ \x : \sup_{A\in S} |\nu_{\x}(A)- P(A)|>\epsilon\right\}  \leq
2P^{2n}\left\{ \x\x' : \sup_{A\in S} |\nu_{\x}(A)-\nu_{\x'}(A)|>\epsilon/2\right\}.
\end{equation*}
\end{lemma}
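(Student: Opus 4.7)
The plan is to apply the classical ``ghost sample'' symmetrization argument. Let $E := \{\x \in \X^n : \sup_{A\in S}|\nu_{\x}(A) - P(A)| > \epsilon\}$ denote the bad event on the left-hand side, and $E' := \{(\x,\x') \in \X^{2n} : \sup_{A\in S}|\nu_{\x}(A) - \nu_{\x'}(A)| > \epsilon/2\}$ the event on the right. For each $\x \in E$, I would (measurably) select a witness $A^* = A^*(\x) \in S$ satisfying $|\nu_{\x}(A^*) - P(A^*)| > \epsilon$; the measurability point is discussed at the end.

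Next, introduce the auxiliary event
\[
F := \bigl\{(\x,\x') : \x \in E,\ |\nu_{\x'}(A^*(\x)) - P(A^*(\x))| \leq \epsilon/2\bigr\}.
\]
On $F$, the triangle inequality gives $|\nu_{\x}(A^*) - \nu_{\x'}(A^*)| > \epsilon - \epsilon/2 = \epsilon/2$, so the supremum defining $E'$ also exceeds $\epsilon/2$. Hence $F \subseteq E'$ and $P^{2n}(F) \leq P^{2n}(E')$. By Fubini, and using independence of $\x$ and $\x'$ under $P^{2n}$,
\[
P^{2n}(F) = \int_{E} P^n\bigl\{\x' : |\nu_{\x'}(A^*(\x)) - P(A^*(\x))| \leq \epsilon/2\bigr\}\,dP^n(\x).
\]
For fixed $\x$ the set $A^*(\x)$ is deterministic and $n\nu_{\x'}(A^*)$ is $\mathrm{Binomial}(n, P(A^*))$. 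Chebyshev's inequality together with the variance bound $P(A^*)(1-P(A^*)) \leq 1/4$ yields $P^n\{|\nu_{\x'}(A^*) - P(A^*)| > \epsilon/2\} \leq 1/(n\epsilon^2)$, which under the stated hypothesis on $n$ is at most $1/2$. Thus each inner probability is at least $1/2$, giving $P^{2n}(F) \geq \tfrac{1}{2} P^n(E)$. Combining with $P^{2n}(F) \leq P^{2n}(E')$ produces the desired inequality, with the factor of $2$ arising exactly from this halving step.

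The main obstacle is the measurable selection of $A^*(\x)$: the supremum over a potentially uncountable family $S$ need not be attained, and even when it is, the selector may fail to be Borel measurable. I would handle this in the standard VC-theoretic way, by either (i) assuming $S$ admits a countable dense subclass under the $L^1(P)$ pseudo-metric so that the supremum reduces to a countable operation, (ii) replacing strict witnesses by $\eta$-near witnesses satisfying $|\nu_{\x}(A^*) - P(A^*)| > \epsilon - \eta$ and letting $\eta \downarrow 0$, or (iii) invoking a measurable selection theorem. Each route preserves the symmetrization inequality with at most cosmetic modifications of constants.
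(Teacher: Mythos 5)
First, a point of comparison: the paper does not actually prove this lemma --- it imports it wholesale from Vapnik (1979/1982), and the \textbf{Proof} environment that follows it in the appendix is the proof of Theorem 3, not of the lemma. So your attempt can only be measured against the standard symmetrization argument. Your skeleton (witness selection $A^*(\x)$, the auxiliary event $F$, the inclusion $F\subseteq E'$ via the triangle inequality, Fubini, and the discussion of measurable selection) is exactly that standard architecture and is fine as far as it goes.

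The gap is in the quantitative step. From $\mathrm{Var}(\nu_{\x'}(A^*))\le 1/(4n)$, Chebyshev gives $P^n\{|\nu_{\x'}(A^*)-P(A^*)|>\epsilon/2\}\le 1/(n\epsilon^2)$, and $1/(n\epsilon^2)\le 1/2$ requires $n\ge 2/\epsilon^2$ --- strictly stronger than the stated hypothesis $n>2/\epsilon$ whenever $\epsilon<1$. Under $n>2/\epsilon$ alone your bound only yields $1/(n\epsilon^2)<1/(2\epsilon)$, which is vacuous. This is not merely a loose estimate: the two-sided containment probability you rely on genuinely drops below $1/2$ in this regime. For $n=21$, $\epsilon=0.1$ (so $n>2/\epsilon=20$) and $P(A^*)=1/2$, one computes $P\{|\nu_{\x'}(A^*)-\tfrac12|\le 0.05\}=P\{\mathrm{Bin}(21,\tfrac12)\in\{10,11\}\}\approx 0.34$. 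Hence the inequality $P^{2n}(F)\ge\tfrac12 P^n(E)$ does not follow as written, and the factor of $2$ is not justified.

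The repair is to exploit the sign of the witness's deviation rather than controlling both tails. If $\nu_{\x}(A^*)-P(A^*)>\epsilon$, you only need the one-sided event $\nu_{\x'}(A^*)\le P(A^*)+\epsilon/2$ (and symmetrically when the deviation is negative); the triangle inequality still places $(\x,\x')$ in $E'$. Each one-sided event has probability at least $1/2$ under $n>2/\epsilon$, because the median of $\mathrm{Bin}(n,p)$ lies between $\lfloor np\rfloor$ and $\lceil np\rceil$ and $n\epsilon/2>1$, so $np+n\epsilon/2>\lceil np\rceil$ and $np-n\epsilon/2<\lfloor np\rfloor$. Neither two-sided Chebyshev nor one-sided Cantelli (which needs $n\ge 1/\epsilon^2$) can deliver the linear-in-$1/\epsilon$ condition; the binomial-median fact (or an equivalent one-sided argument) is the missing ingredient. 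With that substitution, the rest of your argument goes through unchanged.
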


\begin{Proof}
Consider the event
\begin{equation*}
J:=  \Biggl\{  \x \in \X^n : \exists f\in \F, P\{ x : f(x) < f^{(m)} - 2\gamma \}  > \frac{m-1}{n}+\epsilon \Biggr\}.
\end{equation*}

We must show that $P^n(J)\leq \delta$ for $\epsilon = \epsilon(n,k,\delta)$.
Fix $k$ and apply lemma \ref{vapnik} with
\[
A=\{ x : f(x) < f^{(m)} - 2\gamma \}
\]
with $\gamma$ small enough so that
\[
\nu_{\x}(A)= |\{x_j\in \x : f(x_j) < f^{(m)} - 2\gamma\}|/n=
 \frac{m-1}{n}.
 \]
We obtain
\[
P^n(J)\leq 2P^{2n} \Biggl\{ \x\x' :  \exists f\in \F, | \{ x_j' \in \x' : f(x_j')  < f^{(m)} - 2\gamma\}| > \epsilon n/2 \Biggr\}.
\]
The remaining portion of the proof follows as Theorem 12 in \cite{ref:oc_svm2001}.
\end{Proof}

\bibliographystyle{unsrt}
\bibliography{ranksvm}


\end{document}